\newcommand{\qqc}{,\qquad}
\newcommand{\E}[1][]{\mathrm{E}_{#1}}
\newcommand{\Var}[1][]{\mathrm{Var}_{#1}}
\newcommand{\Cov}[1][]{\mathrm{Cov}_{#1}}
\newcommand{\Bias} {\mathrm{Bias}}
\newcommand{\vbx}{\vec{\mathbf{x}}}
\newcommand{\vby}{\vec{\mathbf{y}}}
\newcommand{\vbz}{\vec{\mathbf{z}}}
\newcommand{\vbf}{\vec{\mathbf{f}}}
\newcommand{\vbbeta}{\vec{\boldsymbol{\beta}}}
\newcommand{\vbeps}{\vec{\boldsymbol{\varepsilon}}}
\newcommand{\hbw}{\hat{\mathbf{w}}}
\newcommand{\hby}{\hat{\mathbf{y}}}
\newcommand{\hbx}{\hat{\mathbf{x}}}
\title{The Geometry of Over-parameterized Regression and Adversarial Perturbations}
\author{%
  Jason W.~Rocks\\
  Department of Physics\\
  Boston University\\
  Boston, MA 02215\\
  \texttt{jrocks@bu.edu} 
  \And
  Pankaj~Mehta\thanks{Corresponding author.}\\
  Department of Physics\\
  Faculty of Computing and Data Sciences\\
  Boston University\\
  Boston, MA 02215\\
  \texttt{pankajm@bu.edu} 
}
\begin{document}

\maketitle

\begin{abstract}
 Classical regression has a simple geometric description in terms of a projection of the training labels onto the column space of the design matrix. 
 However, for over-parameterized models  -- where the number of fit parameters is large enough to perfectly fit the training data -- this picture becomes uninformative.
Here, we present an alternative geometric interpretation of regression that applies to both under- and over-parameterized models. 
Unlike the classical picture which takes place in the space of training labels, our new picture resides in the space of input features.
This new feature-based perspective provides a natural geometric interpretation of the double-descent phenomenon in the context of bias and variance, 
explaining why it can occur even in the absence of label noise.
Furthermore, we show that adversarial perturbations -- small perturbations to the input features that result in large changes in label values --  are a generic feature of biased models, arising from the underlying geometry.
We demonstrate these ideas by analyzing three minimal models for over-parameterized linear least squares regression: without basis functions (input features equal model features) and with linear or nonlinear basis functions (two-layer neural networks with linear or nonlinear activation functions, respectively).
\end{abstract}

\section{Introduction}

Classical statistics has a long-standing practice of using geometry as a tool to understand the behavior of linear regression models. 
Ordinary least squares (OLS) regression has a particularly elegant interpretation: given a training data set consisting of input features $\vbx$ paired with labels (responses) $y$, the fitted values $\hat{y}$ for the labels can be calculated by orthogonally projecting them onto the column space of the design matrix ~\cite{Bishop2006} (see Sec.~\ref{sec:label_space} and Fig.~\ref{fig:data_space}). Geometric descriptions have also proven useful for understanding related methods such as instrumental variables analysis, where regression has a natural interpretation in terms of oblique projection~\cite{Bowden1990}.

In the classical setting, a model's performance on new data points (the generalization or test error) is well-understood in terms of the bias-variance trade-off:
while increasing a model's complexity (e.g., increasing the number of fit parameters) reduces bias (error due to the inability to fully express patterns hidden in the data),
it increases variance (error arising from over-sensitivity to non-general aspects of the training set like noise).
This trade-off is reflected in the test error in the form of a classical, ``U-shaped'' curve in which the test error first decreases with model complexity until it reaches a minimum before increasing dramatically as the model overfits the training data. Within this framework, optimal performance is achieved at intermediate model complexities which strike a balance between bias and variance~\cite{Abu-Mostafa2012, Hastie2016}.

However, modern supervised learning techniques like Deep Learning methods defy this dogma, achieving state-of-the-art performance using  ``over-parameterized'' models, where the number of fit parameters is large enough to perfectly fit the training data~\cite{Zhang2017}. In fact, increasing model complexity beyond the interpolation threshold -- the point at which the training error reaches zero -- actually results in a decrease in test error due to a drop in both bias and variance, giving rise to what is now commonly referred to as ``double-descent'' curves~\cite{Belkin2019, Loog2020} [see Fig.~\ref{fig:data_space}(a)]. Recently, the double-descent phenomenon has been shown to be a fundamental property of over-parameterized models ranging from modern neural networks to classical linear regression~\cite{Adlam2020, Advani2020, Ba2020, Barbier2019, Bartlett2020, Belkin2020, Bibas2019, Deng2019, DAscoli2020, DAscoli2020a, Derezinski2020, Gerace2020, Hastie2019, Jacot2020, Kini2020, Lampinen2019, Li2020, Liang2020, Liang2020b, Liao2020, Lin2020, Mitra2019, Mei2019, Muthukumar2019, Nakkiran2019, Nakkiran2020, Rocks2020, Xu2019, Yang2020}.

In this modern setting, the classical geometric picture suffers from two significant drawbacks.
First, the description is uninformative for over-parameterized models since  by definition, they always achieve zero training error and hence the relevant projector simply becomes the identity.
Second, the classical picture fails to provide any intuition about test error and the double-descent phenomenon.

In light of these inadequacies, here we present an alternative geometric picture of OLS that resides in the space of input features rather than the space of training labels. 
This picture applies to both under- and over-parameterized models and provides simple geometric explanations for the double-descent phenomenon and adversarial perturbations. 
We illustrate these ideas by considering three different choices of basis functions for OLS regression: (i) no basis functions (model features equal input features), (ii) random linear features (a two-layer neural network with a linear activation function and a random, fixed middle layer), and (iii) random nonlinear features (a two-layer neural network with an arbitrary nonlinear activation function and a random, fixed middle layer).

\paragraph{Summary of Results} We briefly summarize the contributions of this work:
\begin{itemize}
\item We show that when an OLS model creates a label prediction $\hat{y}$ for a data point $(y, \vbx)$, it also produces an internal representation of the input features $\hbx$. As a result, every data point is associated with a pair of estimates for both the dependent \textit{and} independent variables $(\hat{y}, \hbx)$.
\item We show that the internal representation of the input features can be expressed as ${\hbx = P_f\vbx}$, where the matrix operator $P_f$ first maps each data point onto the subspace  $\mathcal{F}_W$ of input features the model can express, and then onto the subspace $\mathcal{F}_X$ of input feature spanned by the training data.
\item We discuss how the choice of basis functions for the model features affects the geometric nature of the operator $P_f$ in the space of input features: (i) a lack of basis functions results in an orthogonal projection, (ii) a random linear basis results in an oblique projection, and (iii) a random nonlinear basis results in what we call a ``noisy'' oblique projection.
\item We show how model prediction accuracy can be characterized by the angles between directions in the subspaces $\mathcal{F}_W$ and $\mathcal{F}_X$ related by $P_f$. In particular, when one of these angles approaches $90^\circ$, a model loses its ability to express a direction in the training data, resulting in overfitting. We show how this behavior gives rise to the double-descent phenomenon even in the absence of label noise.
\item We study the geometry of adversarial perturbations, showing that perturbations to the input features of a data point decompose along ``adversarial directions'' (the complement of the kernel $P_f$) that result in large typical changes in the label prediction and ``invariant directions'' (the kernel $P_f$) that result in no change on average. We show that the subspace of invariant directions stems from model bias.
\end{itemize}

\section{Theoretical Setup}

In this work, we focus on the supervised learning task of OLS regression,
in which a linear model uses the relationships learned from a training data set to accurately predict the labels of new data points based on their input features.
We consider a training data set of $M$ data points, $\mathcal{D} = \{(y_a, \vbx_a)\}_{a=1}^M$, each consisting of a label $y$ and a vector of $N_f$ input features $\vbx$.
Each label is related to its corresponding input features via the relationship (teacher model)
\begin{equation}
y(\vbx) = y^*(\vbx) + \varepsilon,\label{eq:teacher}
\end{equation}
where $y^*(\vbx)$ is an unknown function and $\varepsilon$ is the label noise.
For convenience, we organize the vectors of input features in the training data set into the rows of a design matrix $X$ of size $M\times N_f$ and define the length $M$ vectors of training labels $\vby$ and label noise $\vbeps$.

Given an arbitrary data point $\vbx$, the linear model for the label predictions (student model) is then
\begin{equation}
\hat{y} = \vbz(\vbx)\cdot\hbw,\label{eq:student}
\end{equation}
where  $\hbw$ is a vector of $N_p$ fit parameters determined by the training set. 
The vector of model features $\vbz(\vbx)$ is a deterministic function that transforms the input features $\vbx$ into a vector of $N_p$ basis functions (referred to as ``hidden'' features in the context of a two-layer neural network).
In analogy to the observation matrix, we organize the vectors of model features evaluated on the input features of the training set,  
$\{\vbz(\vbx_a)\}_{a=1}^M$, into the rows of a model feature matrix $Z$ of size $M\times N_p$ and define the length $M$ vector of training label predictions $\hby$.

We uniquely determine the fit parameters $\hbw$ by requiring that they minimize the following loss function evaluated on the training set $\mathcal{D}$:
\begin{equation}
L(\hbw; \mathcal{D}) = \frac{1}{2}\norm{\Delta \vby}^2 + \frac{\lambda}{2}\norm{\hbw}^2,\label{eq:loss}
\end{equation}
where $\norm{\cdot}$ is the $L_2$ norm and  $\Delta \vby = \vby - \hby$ is the vector of residual training label errors.
The first term in the loss is the mean-squared label error and the second term imposes $L_2$ regularization with regularization parameter $\lambda$.
To simplify the conceptual picture presented in this work, we consider the ridge-less limit $\lambda\rightarrow 0$, allowing us to express the fit parameters that minimize Eq.~\eqref{eq:loss} and the resulting label predictions in Eq.~\eqref{eq:student}, respectively, as
\begin{equation}
\hbw = Z^+\vby\qqc \hat{y} = \vbz(\vbx)\cdot Z^+\vby, \label{eq:solution}
\end{equation}
where $^+$ denotes the Moore-Penrose inverse, or pseudoinverse. 
Later, we discuss the geometric interpretations of this solution for different sets of basis functions for the model features $\vbz(\vbx)$.

\section{Geometry of Label Space}\label{sec:label_space}

\begin{figure*}[t]
\centering
\includegraphics[width=\linewidth]{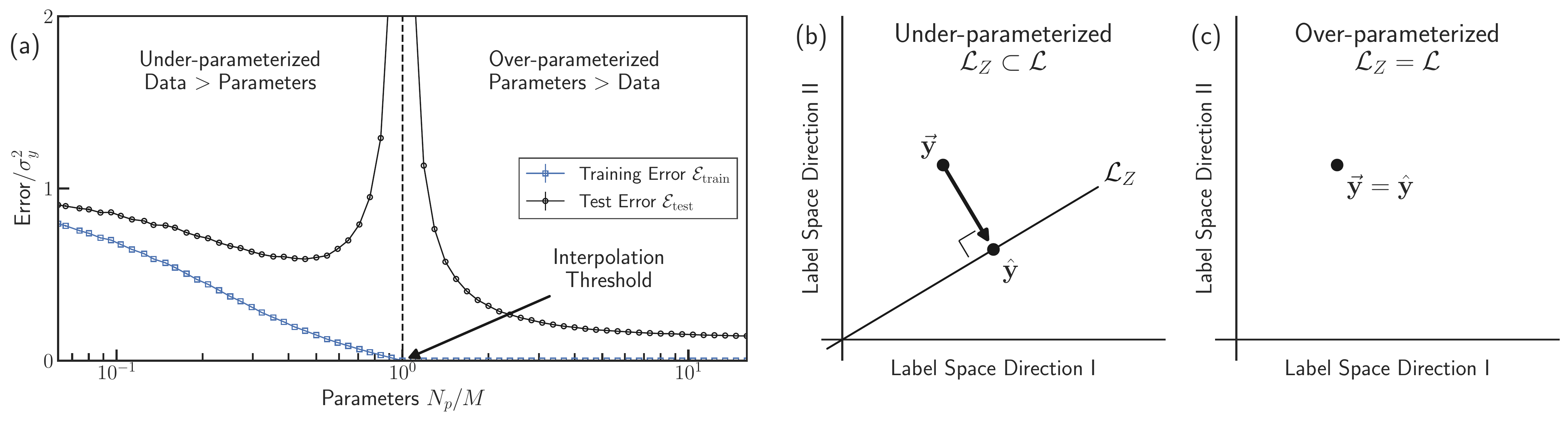}
\caption{\textbf{Double-descent phenomenon and corresponding geometric picture in training label space $\mathcal{L}$.} {\bf(a)} The training error (blue squares) and test error (black circles) for ordinary least square regression as a function of the ratio of fit parameters $N_p$ to training data points $M$. 
In the under-parameterized regime ($N_p < M$), increasing $N_p$ relative to $M$ decreases the training error until it reaches zero at the interpolation threshold (vertical dashed line).
At the same time, the test error follows a classical ``U-shaped'' curve.
In the over-parameterized regime ($N_p > M$), the test error decreases while the training error remains zero.
The $y$-axis is scaled by the variance of the training labels $\sigma_y^2$.
{\bf(b)} In the under-parameterized regime, the label predictions for the training set $\hby$ are found by orthogonally projecting the training labels $\vby$ onto the subspace $\mathcal{L}_Z\subset \mathcal{L}$ spanned by the rows of the matrix of model features evaluated on the training set $Z$.
{\bf(c)} In the over-parameterized regime, the geometric picture is trivial since the two spaces are identical, $\mathcal{L}_Z = \mathcal{L}$, with the orthogonal projection resulting in label predictions with zero error,  $\hby = \vby$. See Sec.~\ref{sec:dd} and SI for numerical details.}
\label{fig:data_space}
\vskip -1em
\end{figure*}

Before introducing the geometric interpretation of the OLS solution in the input feature space,
we first review the classical picture of this solution in the training label space.
First, we define the training label space $\mathcal{L}$ as the $M$-dimensional vector space spanned by all possible values of the vector of training labels $\vby$ 
(e.g., here we consider $\mathcal{L}=\mathbb{R}^M$ for continuous labels distributed over $\mathbb{R}$).
Based on Eq.~\eqref{eq:solution}, the vector of predicted labels takes the form
\begin{equation}
\hby = P_\ell\vby\qqc P_\ell = ZZ^+,
\end{equation}
where the $M\times M$ matrix $P_\ell$ represents an orthogonal projection (${P_\ell^2 = P_\ell = P_\ell^T}$) onto the subspace spanned by the training data via the model features, or the rows of $Z$ which we denote as $\mathcal{L}_Z$. 
In Fig.~\ref{fig:data_space}(b), we depict an example of this projection in the classical under-parameterized setting for a synthetic data set where we have represented $\mathcal{L}_Z$ as a line (see Sec.~\ref{sec:dd} and SI for details).

Most importantly, this picture provides a natural geometric form for the training error, 
\begin{align}
\mathcal{E}_{\mathrm{train}} &= \frac{1}{M} \norm{\Delta \vby}^2\qqc \Delta \vby = (I_M-P_\ell)\vby,\label{eq:training_err}
\end{align}
where $I_M$ is the $M\times M$ identity matrix.
From this, we see that the residual training label errors $\Delta \vby$ are nonzero when $\mathcal{L}_Z$ spans only a fraction of the possible directions in the training label space $\mathcal{L}$.

While this description provides intuition for the under-parameterized regime, its utility diminishes for over-parameterized models.
As the complexity (e.g., the number of fit parameters) of a model is increased, the training error decreases as the dimension of $\mathcal{L}_Z$ increases.
Once the number of parameters is large enough, the training error reaches zero at the interpolation threshold as shown in Fig.~\ref{fig:data_space}(a).
Past this threshold in the over-parameterized, $\mathcal{L}_Z$ spans all of $\mathcal{L}$ and  $P_\ell$ trivially becomes the identity operator $I_M$,
and as shown in Fig.~\ref{fig:data_space}(c), this leads to zero residual label error, $\vby=\hby$.
Furthermore, this geometric picture provides no intuition for the double-descent profile of the test error in Fig.~\ref{fig:data_space}(a) for neither the under- nor over-parameterized regimes.

\section{Geometry of Feature Space}\label{sec:feature}

\begin{figure*}[t]
\centering
\centerline{\includegraphics[width=\linewidth]{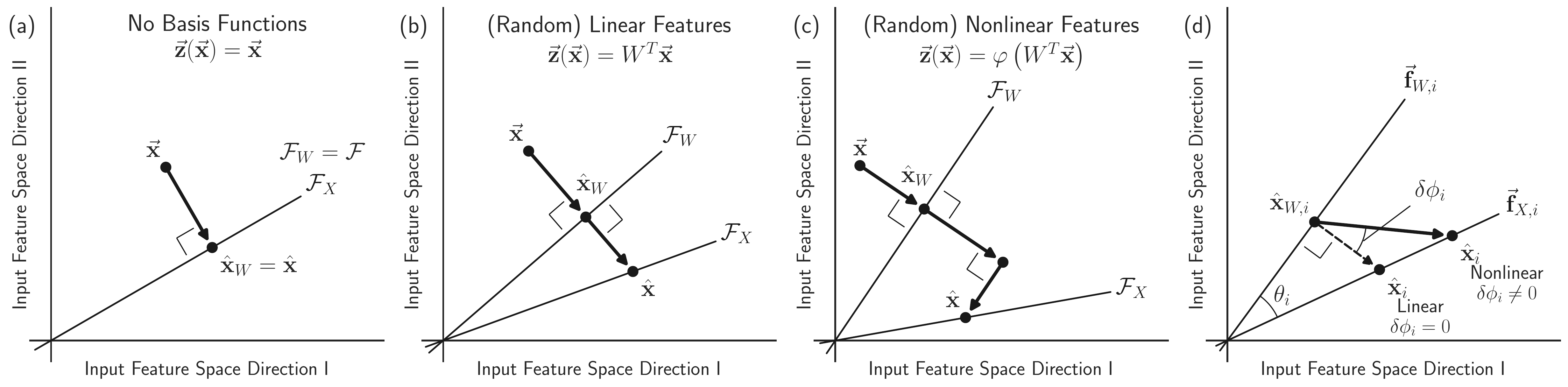}}
\caption{
{\bf Geometric picture in input feature space $\mathcal{F}$.}
{\bf (a)} \emph{No Basis Functions}.
To predict the label of a new data point, the input features $\vbx$ are orthogonally projected onto the subspace $\mathcal{F}_X$ spanned by the input features of the training data (row space of $X$) to obtain $\hbx$, 
the datapoint as ``seen'' from the perspective the model [see Eq.~\eqref{eq:yhat_decomp}]. 
{\bf (b) } \emph{(Random) Linear Features}. 
The input features $\vbx$ undergo an oblique projection to obtain $\hbx$ which depends on the subspace $\mathcal{F}_X$, along with $\mathcal{F}_W$, the subspace of input features that the model can express (column space of $W$).
To perform the oblique projection, the model first orthogonally projects onto $\mathcal{F}_W$ to obtain $\hbx_W$ and then continues orthogonally from $\mathcal{F}_W$ until it intersects $\mathcal{F}_X$ to obtain $\hbx$.
{\bf (c)} \emph{(Random) Nonlinear Features}.
The input features $\vbx$ undergo a ``noisy'' oblique projection to obtain $\hbx$,  which again depends on the two subspaces $\mathcal{F}_X$ and $\mathcal{F}_W$.
Similar to an oblique projection, $\hbx$ is first orthogonally projected onto $\mathcal{F}_W$.
However, unlike an oblique projection, the mapping from $\mathcal{F}_W$ to $\mathcal{F}_X$ has an extra component parallel to $\mathcal{F}_W$, introducing additional error.
{\bf (d)} Mapping between subspaces performed by each term in the singular value decomposition of $P_f$.
The subspace orientation angle $\theta_i$  measures the relative orientation between the left and right singular vectors, $\vbf_{W, i}$ and $\vbf_{X, i}$,
while the projection deviation angle $\delta \phi_i$ is  zero when the mapping is orthogonal to $\vbf_{W, i}$. See Sec.~\ref{sec:feature} for details.
}
\label{fig:feature_space}
\vskip -1em
\end{figure*}

Next, we introduce the geometric interpretation of the OLS solution in the input feature space $\mathcal{F}$. 
We define this space as the $N_f$-dimensional vector space spanned by all possible values the of input features $\vbx$ 
(e.g., here we consider $\mathcal{F}=\mathbb{R}^{N_f}$  for continuous input features distributed over $\mathbb{R}$). In the SI, we formalize all following results as theorems with corresponding proofs when appropriate.

\paragraph{Geometric Decomposition of Label Predictions}
To develop our geometric description, we first decompose the labels and model features into their linear and nonlinear components. 
For a given data point $(y, \vbx)$, we define the label decomposition
\begin{equation}
y(\vbx) = \vbx\cdot\vbbeta +  \delta y^*_{\mathrm{NL}}(\vbx) +  \varepsilon\qqc   \vbbeta \equiv \Sigma_{\vbx}^{-1}\Cov[\vbx]\qty[\vbx, y(\vbx)].\label{eq:ydecomp}
\end{equation}
The first term of $y(\vbx)$ captures the linear correlations between the input features $\vbx$ and the labels $y^*$, characterized by the vector of $N_f$ parameters $\vbbeta$.
The notation $\Cov[\vbx][\cdot, \cdot]$ represents the covariance evaluated with respect to the distribution of input features and we define ${\Sigma_{\vbx, ij}\equiv\Cov[\vbx][x_i, x_j]}$ as the covariance matrix of the input features (assumed to be invertible).
The second term $\delta y^*_{\mathrm{NL}}(\vbx)$ captures the remaining nonlinear component of $y^*$ [defined as ${\delta y^*_{\mathrm{NL}}(\vbx) \equiv y^*(\vbx) - \vbx\cdot\vbbeta}$] 
and is statistically independent of the linear term with respect to the input features (see SI for proof).

Next, we define the model feature decomposition
\begin{equation}
\vbz(\vbx) = W^T\vbx +  \delta \vbz_{\mathrm{NL}}(\vbx)\qqc  W \equiv  \Sigma_{\vbx}^{-1}\Cov[\vbx]\qty[\vbx, \vbz^T(\vbx)].\label{eq:zdecomp}
\end{equation}
Analogous to the label decomposition, the first term of $\vbz(\vbx)$ captures the linear correlations between the input features $\vbx$ and the model features $\vbz$, 
characterized by the $N_f\times N_p$ matrix $W$.
The second term $\delta \vbz_{\mathrm{NL}}(\vbx)$ captures the remaining nonlinear component of $\vbz$ [defined as ${\delta \vbz_{\mathrm{NL}}(\vbx) \equiv \vbz(\vbx) - W^T\vbx}$],
and is also statistically independent of the linear term with respect to the input features (see SI for proof).

Using the definitions above,
we decompose the solution for the predicted labels in Eq.~\eqref{eq:solution} for an arbitrary data point $\vbx$ as follows  (see SI for proof):
\begin{equation}
\begin{gathered}
\hat{y}(\vbx) = \hbx\cdot \vbbeta  +   \delta \hat{y}(\vbx)\qqc \hbx \equiv P_f\vbx \qqc P_f \equiv (WZ^+X)^T\\
\delta \hat{y} (\vbx)  \equiv  \delta \vbz_{\mathrm{NL}}^T(\vbx)  Z^+ \vby  + \vbx^TWZ^+\qty(\delta \vby^*_{\mathrm{NL}}+\vbeps),\label{eq:yhat_decomp}
\end{gathered}
\end{equation}
where we have defined the length $N_f$ vector $\hbx$ in terms of the $N_f\times N_f$ matrix operator $P_f$.
The quantity $\delta \hat{y} (\vbx)$ captures the elements of the labels or model features that the model interprets as ``noise'' and goes to zero in the absence of label noise and nonlinearities.

\paragraph{Internal Representation of Input Features} 
The vector $\hbx$ and the operator $P_f$ play central roles in our geometric description.
Notice that in the expressions above, $\hbx$ is the counterpart to the label prediction $\hat{y}$, so that for a given data point $(y, \vbx)$, the model produces a pair of estimates for both the dependent and independent variables $(\hat{y}, \hbx)$. 
For this reason, one can interpret $\hbx$ as the model's internal representation of the input features $\vbx$.
In other words, $\hbx$ is the data point as ``seen'' from the perspective of the model via the operator $P_f$,
which encodes the patterns a model has learned from the training set $X$ based on its model features as characterized by $W$.

\paragraph{Mapping Between Subspaces} 
In order to construct the estimate of the input features $\hbx$ for a new data point $\vbx$, 
the operator $P_f$ performs a mapping from the subspace $\mathcal{F}_W$ of input features the model can express (the row space of $W$) to the subspace  $\mathcal{F}_X$ of input features spanned by the training data set (the column space of $X$).
To analyze this mapping, we represent $P_f$ in terms of its singular value decomposition (SVD), 
\begin{equation}
P_f  =  \sum_{i: \sigma_i > 0} \sigma_i\vbf_{X, i}\vbf_{W, i}^T,\label{eq:SVD}
\end{equation}
where the sum ranges over the non-zero singular values $\sigma_i$.
The collections of  left and right singular vectors, $\vbf_{X, i}$ and $\vbf_{W, i}$, each form orthonormal bases within $\mathcal{F}_X$ and $\mathcal{F}_W$, respectively.
However, we note that it is possible for these two bases to span only a portion of their respective subspaces if they differ in dimension.
As a result, the vectors $\vbf_{X, i}$ span only a subspace of $\mathcal{F}_X$ if the model completely misses some of the input features spanned by the training data set.
Similarly, the vectors $\vbf_{W, i}$ span only a subspace of $\mathcal{F}_W$ if some of the input features the model can express are not actually used to learn anything from the training set.

To understand the behavior of $P_f$, 
we consider the mapping independently performed by each term in its SVD, depicted in Fig.~\ref{fig:feature_space}(d).
In the first step, $\vbx$ is mapped onto $\mathcal{F}_W$, resulting in the intermediate representation ${\hbx_W = \sum_i \hbx_{W, i} = \sum_i  \vbf_{W, i}\vbf_{W, i}^T\vbx}$.
Next, the data point is mapped from $\mathcal{F}_W$ to $\mathcal{F}_X$ resulting in ${\hbx = \sum_i \hbx_i = \sum_i \sigma_i\vbf_{X, i}\vbf_{W, i}^T \hbx_{W, i}= \sum_i \sigma_i\vbf_{X, i}\vbf_{W, i}^T\vbx}$.
Furthermore, we define the \textit{subspace orientation angle} $\theta_i$ between each pair of singular vectors, $\vbf_{X, i}$ and $\vbf_{W, i}$, shown in Fig.~\ref{fig:feature_space}(d).
Collectively, these angles characterize the relative orientation and overlap between the subspaces $\mathcal{F}_X$ and $\mathcal{F}_W$.
Later, we find that in special cases, data points are mapped orthogonally from $\mathcal{F}_W$ to $\mathcal{F}_X$.
To capture this behavior, we define the \textit{projection deviation angle} $\delta \phi_i$, which measures how this mapping deviates from  $90^\circ$ for each term in the SVD. As shown in Fig.~\ref{fig:feature_space}(d), we define $\delta \phi_i$ as $90^\circ$ minus the angle between the vectors $\vbf_{W, i}$ and $\hbx_i - \hbx_{W, i}$ (for $\vbx= \vbf_{W, i}$).

Next, we examine the properties of the geometric operator $P_f$ for specific choices of basis functions for the model features $\vbz(\vbx)$.

\paragraph{No Basis Functions: Orthogonal Projection}
The simplest set of model features in linear regression are the input features themselves (where $W = I_{N_f}$, the $N_f\times N_f$ identity matrix),
\begin{equation}
\vbz(\vbx) = \vbx\qqc P_f = X^+X=\sum_i \vbf_{X, i}\vbf_{X, i}^T.\label{eq:orthogonal}
\end{equation}
In this case, the operator $P_f$ reduces to an orthogonal projector in input feature space (${P_f^2 = P_f = P_f^T}$, with all $\theta_i = 0$ and $\delta\phi_i$ undefined, see SI for proof).

Because the model can express any input features ($\mathcal{F}_W = \mathcal{F}$),
the accuracy of the model's internal representation $\hbx$ is only limited by the input features sampled by the training data set via $X$.
In Fig.~\ref{fig:feature_space}(a), we show how the orthogonal projection takes place for a synthetic data set.
The model constructs $\hbx$ by orthogonally projecting $\vbx$ onto the subspace $\mathcal{F}_X$ of input features spanned by the training data, allowing the model to compare the new data point to data it encountered previously during training.

\paragraph{(Random) Linear Features: Oblique Projection}
Next, we consider a basis in which the model features are linearly related to the input features via a matrix $W$ [defined consistent with Eq.~\eqref{eq:zdecomp}],
\begin{equation}
\vbz(\vbx) = W^T\vbx\qqc P_f = (W(XW)^+X)^T =  \sum_i \frac{1}{\cos\theta_i}\vbf_{X, i}\vbf_{W, i}^T.\label{eq:oblique}
\end{equation}
For concreteness, we choose $W$ to be a random matrix so that the model is equivalent to a two-layer neural network with a linear activation function (with a random, fixed middle layer).
For this basis, $P_f$ becomes an oblique projector, satisfying the condition for a projector, ${P_f^2 = P_f}$, but not for orthogonality, ${P_f = P_f^T}$.
The singular vectors form mutually orthogonal bases such that ${\vbf_{W, i}\cdot\vbf_{X, j} = \delta_{ij}\cos\theta_i }$ with $\sigma_i = 1/\cos\theta_i$ and $\delta \phi_i =  0$ (see SI for proof).

Unlike the model features in the previous section, here the accuracy of the internal representation $\hbx$ for a data point $\vbx$ is limited by both the input features sampled by the training set via $X$ \textit{and} the input features that the model can express via $W$.
The resulting form of $P_f$ as an oblique projector reflects this behavior, depending on both the subspaces $\mathcal{F}_X$  and $\mathcal{F}_W$.
In Fig.~\ref{fig:feature_space}(b), we demonstrate how the oblique projection takes place in two steps for a synthetic data set (see Sec.~\ref{sec:dd} and SI). 
First, the input features $\vbx$ are orthogonally projected onto the subspace $\mathcal{F}_W$, 
resulting in an intermediate representation $\hbx_W$ in terms of input features the model can express.
Next, $\hbx_W$ is mapped onto the second subspace $\mathcal{F}_X$, but in a manner orthogonal to the \textit{first} subspace $\mathcal{F}_W$,
allowing the model to compare the current data point to those that it encountered previously in training.
As a result of this second step, information contained in $\vbx$ may be lost if the subspaces do not perfectly overlap. 
This occurs if any pair of left and right singular vectors have a subspace orientation angle $\theta_i > 0$.
%For an oblique projector, $\theta_i$ are equivalent to the principal angles between the subspaces in certain limits~\cite{Afriat1957, Knyazev2002}.
In Sec.~\ref{sec:dd}, we discuss how this misalignment can cause large test errors at the interpolation threshold when one of the angles $\theta_i$ approaches $90^\circ$ and the model loses the ability to accurately express a direction in the training data.

\paragraph{(Random) Nonlinear Features: ``Noisy'' Oblique Projection}
Lastly, we consider nonlinear basis functions in which the model features take the form
\begin{equation}
\vbz(\vbx) = \varphi\qty(W^T\vbx)\qqc P_f = (WZ^+X)^T =  \sum_i \frac{\cos\delta\phi_i}{\cos(\theta_i + \delta\phi_i)}\vbf_{X, i}\vbf_{W, i}^T,\label{eq:noisy}
\end{equation}
where $\varphi$ is an arbitrary nonlinear activation function that is applied to each matrix element and $W$ is a matrix.
For example, ReLU activation corresponds to $\varphi(a) = C\max(0, a)$ where $C$ is a normalization constant defined so that Eq.~\eqref{eq:zdecomp} holds.
If we choose $W$ to be random, the model is equivalent to a two-layer neural network with a nonlinear activation function (with a random, fixed middle layer).
Unlike with the linear model features of the previous two sections, $P_f$ is no longer a true projector as the condition $P_f^2 = P_f$ generally no longer holds.
In addition, the left and right singular vectors are no longer mutually orthogonal and the singular values depend on both $\theta_i$ and $\delta\phi_i$ as shown above (see SI for proof).

Here, the accuracy of the internal representation $\hbx$ is again limited by both the input features spanned by the training data set and the input features that the model can express.
In Fig.~\ref{fig:feature_space}(c), we show how $P_f$ differs from the linear case for a synthetic data set with ReLU activation (see Sec.~\ref{sec:dd} and SI).
Similar to oblique projection, a new data point $\vbx$ is first orthogonally projected onto $\mathcal{F}_W$ to obtain the intermediate representation $\hbx_W$ in terms of input features the model can express.
However, the mapping from of $\hbx_W$ onto $\mathcal{F}_X$, used to compare the data point $\vbx$ to those encountered during training, is no longer orthogonal to $\mathcal{F}_W$. 
As a result, even if the model is capable of expressing all possible input features ($\mathcal{F}_W=\mathcal{F}$) and the input feature space is fully sampled during training  ($\mathcal{F}_X=\mathcal{F}$),
the internal representation $\hbx$ may not attain perfect accuracy.
Although $P_f$ is no longer formally an oblique projector, we observe that it does share some elements in common with one.
In Sec.~\ref{sec:dd}, we show that near the interpolation threshold, $P_f$ approximates to an oblique projector, resulting in large test errors at the interpolation threshold when one of the angles $\theta_i$ approaches $90^\circ$.
Furthermore, it is known that nonlinear model features in regression tend to behave like linear features with noise~\cite{Mei2019}.
Thus, we refer to $P_f$ as a ``noisy'' oblique projector in this case.

\section{Double-Descent in the Absence of Noise}\label{sec:dd}

\begin{figure*}[t]
\centering
\includegraphics[width=\linewidth]{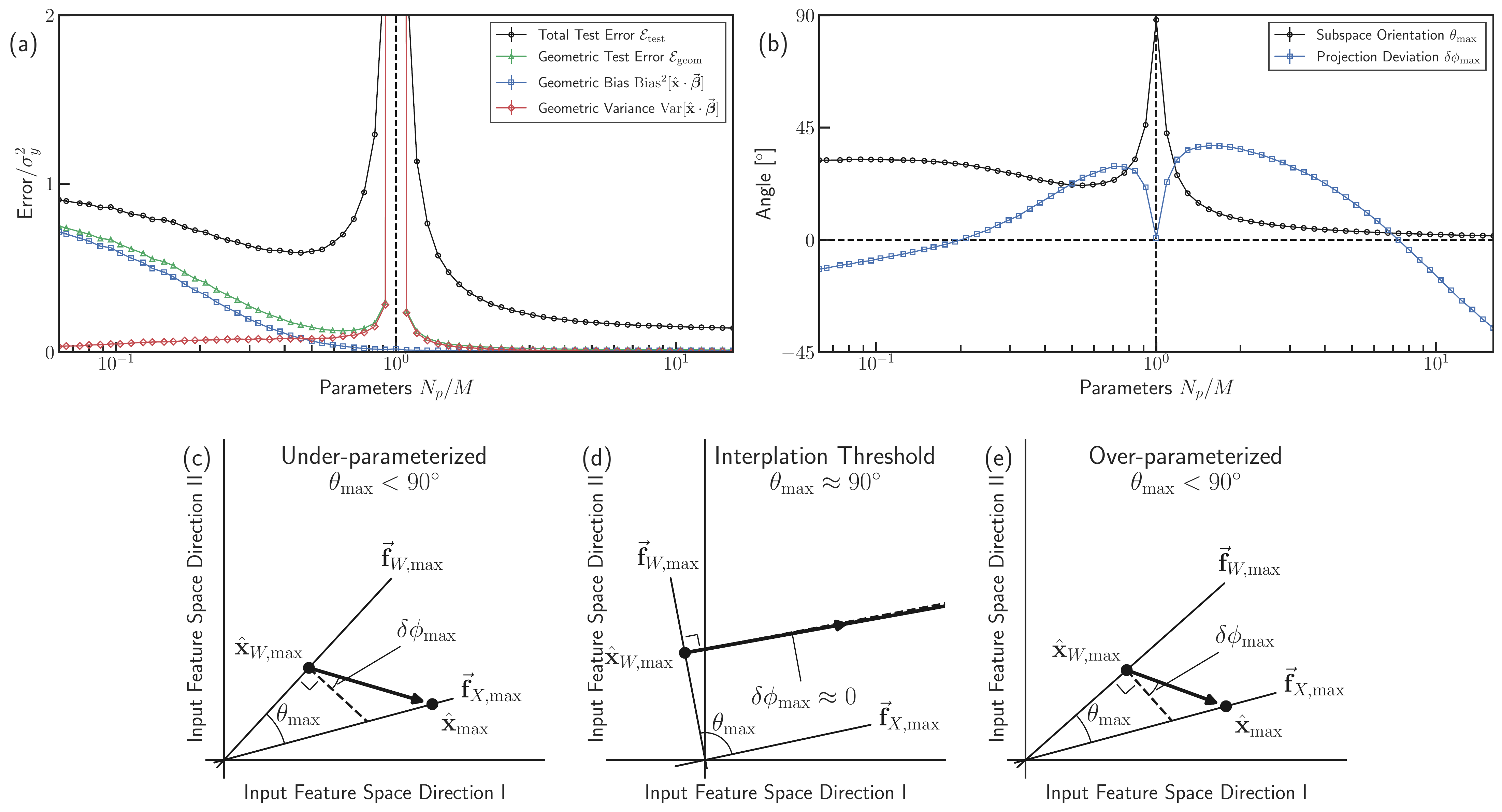}
\caption{\textbf{Double-descent phenomenon and corresponding geometric picture in input feature space.}
\textbf{(a)} The total test error (black circles), geometric test error (green triangles), geometric bias (blue squares), and geometric variance (red diamonds) for random nonlinear features as a function of the ratio of fit parameters $N_p$ to training data points $M$. The $y$-axis is scaled by the variance of the training labels $\sigma_y^2$.
\textbf{(b)} The subspace orientation angle $\theta_{\max}$ (black circles) and the projection deviation angle $\delta \phi_i$ (blue squares) associated with the maximum singular value in the SVD of $P_f$ for the model in (a). 
At the interpolation threshold (vertical dashed line), $\theta_{\max}$ approaches $90^\circ$ while $\delta \phi_i$ approaches $0^\circ$.
\textbf{(c)} In the under-parameterized regime ($N_p < M$),  $\theta_{\max} < 90^\circ$, resulting in low geometric variance.
\textbf{(d)} Near the interpolation threshold ($N_p \approx M$), $\theta_{\max} \approx 90^\circ$, resulting in large geometric variance.
\textbf{(e)} In the over-parameterized regime ($N_p > M$),  once again $\theta_{\max} < 90^\circ$ with low geometric variance. See Sec.~\ref{sec:dd} and SI for numerical details. }
\label{fig:data}
\vskip -1em
\end{figure*}

A surprising feature of double-descent in over-parameterized regression is that the phenomenon can occur even in the absence of label noise and nonlinearities in the labels or model features [i.e., $\delta y^*_{\mathrm{NL}}(\vbx)=0$ and $\varepsilon=0$ in Eq.~\eqref{eq:ydecomp} and $\delta \vbz_{\mathrm{NL}}(\vbx)=0$ in Eq.~\eqref{eq:zdecomp}]~\cite{Rocks2020}.
To show this, we use the decompositions of the labels in Eq.~\eqref{eq:ydecomp} and their predictions in Eq.~\eqref{eq:yhat_decomp} to define the \textit{geometric test error}  as $\mathcal{E}_{\mathrm{geom}} \equiv [\Delta \vbx \cdot \vbbeta]^2$, which depends on the vector quantity $\Delta \vbx  \equiv  ( I_{N_f} - P_f)\vbx$.
This quantity is analogous to the residual label error defined in Eq.~\eqref{eq:training_err}, 
so we interpret it as the residual error of the input feature estimate $\hbx$.
Furthermore, we decompose the geometric test error into \textit{geometric bias} and \textit{geometric variance} terms by averaging over the distribution of possible training sets $\mathcal{D}$ (see SI for proof),
\begin{equation}
\begin{gathered}
\E[\mathcal{D}]\qty[\mathcal{E}_{\mathrm{geom}}] =  \Bias^2[\hbx\cdot\vbbeta] + \Var[][\hbx\cdot\vbbeta],\\
\Bias[\hbx\cdot\vbbeta] \equiv \E[\mathcal{D}][\Delta \vbx\cdot\vbbeta] \qqc
\Var[][\hbx\cdot\vbbeta] \equiv \E[\mathcal{D}][(\hbx\cdot\vbbeta)^2] - \E[\mathcal{D}][\hbx\cdot\vbbeta]^2.\label{eq:bv_decomp}
\end{gathered}
\end{equation}
In the absence of label noise and nonlinearities, the traditional definitions~\cite{Bishop2006} of the test error, bias, and variance reduce to these geometric versions (see SI for proof).

In Fig.~\ref{fig:data}(a), we plot these three quantities, along with the total test error ${\mathcal{E}_{\mathrm{test}} = [y(\vbx) - \hat{y}(\vbx)]^2}$,
for a basis of random nonlinear features with ReLU activation $\smash{\varphi(x) = 2\max(0, x)}$ for a synthetic data set with linear labels $\smash{y^*(\vbx) = \vbx\cdot \vbbeta}$.
We sample the input features $\vbx$ and label noise $\varepsilon$ for the training and test data sets (each of size $M=512$) i.i.d. from normal distributions with mean zero and variances $\sigma_X^2/N_f$ and $\sigma_\varepsilon^2$, respectively.
For each simulation, we also sample the parameters $\smash{\vbbeta}$ and matrix $W$ i.i.d. from normal distributions with mean zero and variances $\smash{\sigma_W^2/N_p}$ and $\sigma_\beta^2$, respectively.
We choose a signal-to-noise ratio for the labels of $\smash{\sigma_X^2\sigma_\beta^2/\sigma_\varepsilon^2 = 10}$ and a regularization parameter of $\lambda=10^{-8}$. We use this general setup to construct all numerical examples and schematics presented in this work. All points in error curves are averaged over at least 500 independent simulations with small error bars indicating the error on the mean.
We also include more comprehensive numerical results for all three models in the SI.

As expected, we find that the divergence in the geometric test error stems from the geometric variance at the interpolation threshold.
Furthermore, we find that the over-parameterized regime performs better than its under-parameterized counterpart due to a monotonic decrease in the geometric bias as the number of fit parameters increases.
From Eq.~\eqref{eq:bv_decomp}, we see that the bias arises from residual input feature error. 
Just as  $\Delta y$ decreases as $P_\ell$ approaches the identity $I_M$, resulting in lower training error,
$\Delta \vbx$ decreases when $P_f$ approaches the identity $I_{N_f}$, resulting in lower bias (see SI for numerics).

The overfitting that gives rise to the double-descent behavior occurs when a model is not able to accurately express directions in the training data using its model features. 
To see this in our geometric picture, we make use of the observation that the test error diverges at the interpolation threshold when $Z$ contains a small singular value,
or alternatively, $P_f$ contains a large singular value~\cite{Advani2020}. 
Therefore, we focus on the maximum singular value $\sigma_{\max}$ which diverges and dominates the SVD of $P_f$ in Eq.~\eqref{eq:SVD} near the interpolation threshold (see SI for numerics).
In Fig~\ref{fig:data}(b), we plot the subspace orientation angle $\theta_{\max}$ and projection deviation angle $\delta \phi_{\max}$ associated with $\sigma_{\max}$.
We find that near the interpolation threshold, $\theta_{\max}$ approaches $90^\circ$, while $\delta \phi_{\max}$ approaches $0^\circ$,
and as a result, $P_f$ approaches a true oblique projector (see SI for proof).

\begin{wrapfigure}{R}{0.5\linewidth}
\vskip -1.0em
\centering
\includegraphics[width=\linewidth]{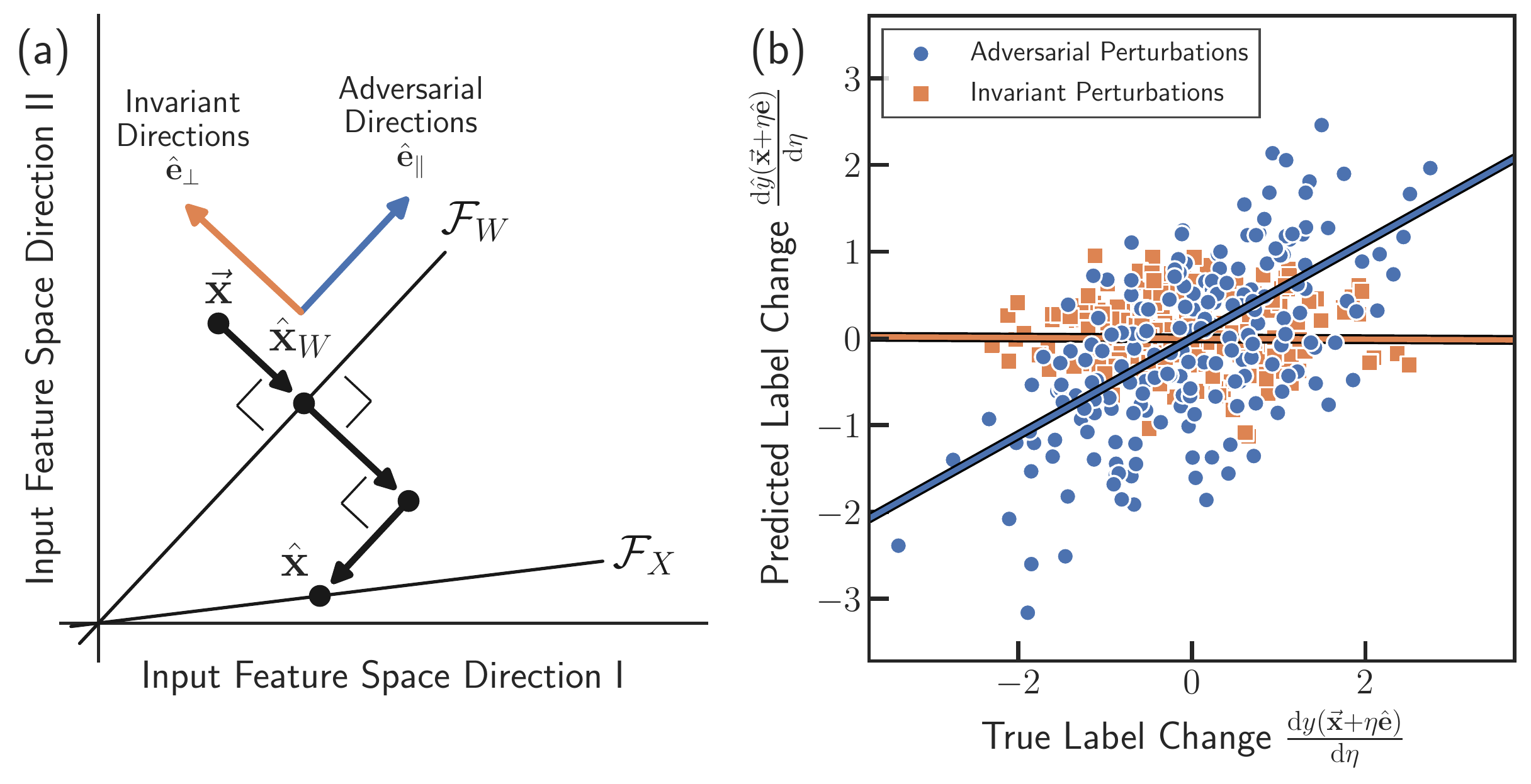}
\caption{{\bf Adversarial perturbations}.
{\bf (a)} A perturbation to the input features of a data point can be decomposed into  ``adversarial'' and
``invariant'' directions, parallel and perpendicular, respectively, to the subspace of input features that can be captured by the model. 
{\bf (b)} The change in the predicted and true labels for $200$ random adversarial (blue) and $200$ random invariant (orange) perturbations applied to the same data point with correlations shown as lines (see Sec.~\ref{sec:dd} and SI for numerical details).}
\label{fig:adversarial_perturb}
\vskip -4.0em
\end{wrapfigure}

Most importantly, we find that test error arises when directions in the subspaces $\mathcal{F}_W$ and $\mathcal{F}_X$ related via $P_f$ become misaligned $\theta_{\max} > 0$.
In Figs.~\ref{fig:data}(c)-(d), we illustrate this behavior for the singular vectors and angles associated with $\sigma_{\max}$.
When corresponding directions in the subspaces $\mathcal{F}_W$ and $\mathcal{F}_X$ do not perfectly overlap, 
the model is not able to accurately express the training data.
The closer $\theta_{\max}$ is to $90^\circ$, the worse the model becomes as it tends to overfit the poorly expressed directions in the training data,
resulting in large geometric variance and test errors.

\section{Adversarial Perturbations}

Our geometric picture also yields insights into the nature of adversarial perturbations. 
Inspired by Ref.~\cite{Goodfellow2015},
we consider perturbing a data point, $\vbx \rightarrow \vbx + \eta \hat{\mathbf{e}}$, where $\hat{\mathbf{e}}$ is a random unit vector and $\eta$ measures the magnitude of the perturbation. 
As depicted in Fig.~\ref{fig:adversarial_perturb}(a), any perturbation can be decomposed into components parallel and perpendicular to the kernel of $P_f$. 
Mathematically, this is the statement that we can write ${\hat{\mathbf{e}}= \hat{\mathbf{e}}_\parallel+ \hat{\mathbf{e}}_\perp}$ where  ${\sum_i \vbf_{W, i}\vbf_{W, i}^T\hat{\mathbf{e}}_\parallel = \hat{\mathbf{e}}_\parallel}$ and ${\sum_i \vbf_{W, i}\vbf_{W, i}^T\hat{\mathbf{e}}_\perp = 0}$. 
Our picture suggests that perturbations $\hat{\mathbf{e}} = \hat{\mathbf{e}}_\parallel$,  should have a large effect on the label predictions $\hat{y}$, whereas the model should be insensitive to perturbations $\hat{\mathbf{e}} = \hat{\mathbf{e}}_\perp$, which result in similar internal representations $\hbx$ (with any variations coming entirely from noise and nonlinear effects). 
For this reason, the complement to the kernel of $P_f$ is the subspace of adversarial directions and the kernel of $P_f$ is that of invariant directions.
Furthermore, Eq.~\eqref{eq:bv_decomp} indicates that the latter subspace is a result of nonzero geometric bias.

To test this intuition, we calculate how the model predictions $\hat{y}$ and true labels $y$ change in response to small random perturbation from both subspaces.
In Fig.~\ref{fig:adversarial_perturb}(b), we plot the derivatives $\dv{\hat{y}(\vbx+ \eta \hat{\mathbf{e}})}{\eta}$ and $\dv{y(\vbx+ \eta \hat{\mathbf{e}})}{\eta}$ 
for random perturbations that are purely adversarial or invariant (see SI for details).
We find that adversarial perturbations tend to significantly change the predicted labels when the true labels change (correlation is shown as a blue line), 
whereas invariant perturbations on average do not result in changes to the predicted labels that correlate with the changes in true labels (orange line).

\section{Conclusions}

In this paper, we presented an alternative geometric interpretation of regression, residing in the space of input features so that it applies both under- and over-parameterized models.
This new picture provides a geometric interpretation of the double-descent phenomenon and suggests a natural scheme for decomposing perturbations into adversarial and invariant directions. 
It will be interesting to see which, if any, of these geometric intuitions can be extended to more complicated settings such as deep neural networks and classification tasks.

\section*{Broader Impacts}
The authors do not believe this theoretical work will raise any ethical concerns or will generate any adverse future societal consequences.

%\begin{ack}
\section*{Acknowledgments}
This work was supported by NIH NIGMS grant 1R35GM119461 and a Simons Investigator in the Mathematical Modeling of Living Systems (MMLS) award to PM. 
The authors also acknowledge support from the Shared Computing Cluster administered by Boston University Research Computing Services.
%\end{ack}

%
%\bibliographystyle{unsrt}
%\bibliography{geom}

\clearpage

\section*{\centering Supporting Information: \\ The Geometry of Over-parameterized Regression and Adversarial Perturbations}

\setcounter{section}{0}
\renewcommand{\thesection}{S\arabic{section}}
\renewcommand{\thetable}{S\arabic{table}}
\renewcommand{\thefigure}{S\arabic{figure}}

\section{Mathematical Proofs}

\numberwithin{equation}{section}
\newtheorem{theorem}{Theorem}

\begin{theorem}
In the label decomposition of Eq.~\eqref{eq:ydecomp}, the linear term $\vbx\cdot\vbbeta$ and the nonlinear term $\delta y^*_{\mathrm{NL}}(\vbx)$ are statistically independent with respect to the distribution of input features $\vbx$.
\end{theorem}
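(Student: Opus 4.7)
The plan is to reduce the claim to a direct covariance computation. Since both $\vbx\cdot\vbbeta$ and $\delta y^*_{\mathrm{NL}}(\vbx)$ are deterministic functions of the same random vector $\vbx$, full stochastic independence is generically impossible; I would therefore interpret the statement as the vanishing of $\Cov[\vbx][\vbx\cdot\vbbeta,\,\delta y^*_{\mathrm{NL}}(\vbx)]$ (the $L^2$-orthogonality with respect to the input distribution that is actually used later in the bias--variance decomposition). It is worth noting at the outset of the proof that this coincides with stochastic independence in the Gaussian input case, which is the regime relevant to the numerical examples.

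The first step is to exploit bilinearity of the covariance in its first argument to write
\begin{equation}
\Cov[\vbx]\qty[\vbx\cdot\vbbeta,\,\delta y^*_{\mathrm{NL}}(\vbx)] \;=\; \vbbeta^{T}\,\Cov[\vbx]\qty[\vbx,\,\delta y^*_{\mathrm{NL}}(\vbx)].
\end{equation}
Next, I would substitute the definition $\delta y^*_{\mathrm{NL}}(\vbx) = y^*(\vbx) - \vbx\cdot\vbbeta$ into the inner covariance, producing
\begin{equation}
\Cov[\vbx]\qty[\vbx,\,\delta y^*_{\mathrm{NL}}(\vbx)] \;=\; \Cov[\vbx]\qty[\vbx,\,y^*(\vbx)] \;-\; \Sigma_{\vbx}\vbbeta,
\end{equation}
where I have used $\Cov[\vbx][\vbx,\vbx\cdot\vbbeta] = \Sigma_{\vbx}\vbbeta$.

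The third step is to invoke the definition of $\vbbeta$ from Eq.~\eqref{eq:ydecomp}. Because the label noise $\varepsilon$ in Eq.~\eqref{eq:teacher} is assumed to be independent of $\vbx$ with zero mean, $\Cov[\vbx][\vbx,y(\vbx)] = \Cov[\vbx][\vbx,y^*(\vbx)]$, so $\Sigma_{\vbx}\vbbeta = \Cov[\vbx][\vbx,y^*(\vbx)]$. Substituting back, the bracket vanishes, giving $\Cov[\vbx][\vbx\cdot\vbbeta,\delta y^*_{\mathrm{NL}}(\vbx)] = 0$, which is the desired conclusion. The companion statement for the model-feature decomposition of Eq.~\eqref{eq:zdecomp} follows by the identical argument applied component-wise, with $y^*$ replaced by each entry of $\vbz$.

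The main obstacle is really just a matter of framing rather than calculation: one must be explicit that what is proven is vanishing covariance (i.e.\ orthogonality in the feature-induced $L^2$ inner product), and that the stronger word ``independence'' is justified either by restricting attention to Gaussian inputs or by reading it in this weaker $L^2$ sense, since it is only this weaker property that the downstream bias--variance decomposition of Eq.~\eqref{eq:bv_decomp} actually requires. Once this is settled, the computation itself is a one-line consequence of the normal equations implicit in the definition of $\vbbeta$.
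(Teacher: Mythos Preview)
Your proposal is correct and follows essentially the same route as the paper: both arguments compute $\Cov[\vbx][\vbx\cdot\vbbeta,\delta y^*_{\mathrm{NL}}(\vbx)]$ directly by substituting the definition of $\delta y^*_{\mathrm{NL}}$ and then invoking the defining relation $\Sigma_{\vbx}\vbbeta = \Cov[\vbx][\vbx,y(\vbx)]$ to cancel the two terms. Your additional remark that the theorem really establishes zero covariance rather than full stochastic independence (and that only this weaker property is needed downstream) is a useful clarification that the paper's proof leaves implicit.
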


\begin{proof}
We show the covariance of $\vbx\cdot\vbbeta$ and $\delta y^*_{\mathrm{NL}}(\vbx)$ is zero:
\begin{equation}
\begin{aligned}
\Cov[\vbx][\vbx\cdot\vbbeta, \delta y^*_{\mathrm{NL}}(\vbx)] &= \Cov[\vbx][\vbx\cdot\vbbeta, y^*(\vbx) - \vbx\cdot\vbbeta]\\
&= \vbbeta\cdot\Cov[\vbx][\vbx, y^*(\vbx) ] - \vbbeta\cdot\Cov[\vbx][\vbx, \vbx^T ]\vbbeta\\
&= \vbbeta\cdot\Sigma_{\vbx}\vbbeta - \vbbeta\cdot\Sigma_{\vbx}\vbbeta\\
&= 0.
\end{aligned}
\end{equation}
\end{proof}

\begin{theorem}
In the model feature decomposition of Eq.~\eqref{eq:zdecomp}, the linear term $W^T\vbx$ and the nonlinear term $\delta\vbz_{\mathrm{NL}}(\vbx)$ are statistically independent with respect to the distribution of input features $\vbx$.
\end{theorem}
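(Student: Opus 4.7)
The plan is to mirror the proof of Theorem 1 by replacing the scalar label with the $N_p$-vector of model features, so that the cross-covariance becomes an $N_f \times N_p$ matrix. Since Theorem 1 established its notion of statistical independence by verifying that a covariance vanishes, the goal here is analogously to show that the matrix $\Cov[\vbx][W^T\vbx, \delta\vbz_{\mathrm{NL}}^T(\vbx)]$ is identically zero.

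First I would substitute $\delta\vbz_{\mathrm{NL}}(\vbx) = \vbz(\vbx) - W^T\vbx$ into the right argument and expand by bilinearity of covariance:
\[
\Cov[\vbx][W^T\vbx, \delta\vbz_{\mathrm{NL}}^T(\vbx)] = W^T\Cov[\vbx][\vbx, \vbz^T(\vbx)] - W^T\Cov[\vbx][\vbx, \vbx^T]\, W.
\]
The second term is immediately $W^T\Sigma_{\vbx} W$ by the definition of the input-feature covariance matrix. For the first term, I would invoke the defining relation $W \equiv \Sigma_{\vbx}^{-1}\Cov[\vbx][\vbx, \vbz^T(\vbx)]$, which rearranges to $\Cov[\vbx][\vbx, \vbz^T(\vbx)] = \Sigma_{\vbx} W$. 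Substituting gives $W^T\Sigma_{\vbx} W$, which cancels the second term and completes the argument.

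There is no substantive obstacle here; the main effort is purely notational bookkeeping. One must be careful to keep transposes consistent because $\vbz(\vbx)$ is a length-$N_p$ vector and the cross-covariance $\Cov[\vbx][\vbx, \vbz^T(\vbx)]$ is an $N_f \times N_p$ matrix rather than a scalar as in Theorem 1, so the order of the factors (and in particular the placement of the trailing $W$) must match the matrix dimensions throughout. As in that earlier proof, the vanishing of this cross-covariance is what the statement labels ``statistical independence,'' and this second-order criterion is all that is used in the geometric decomposition of Eq.~\eqref{eq:yhat_decomp}.
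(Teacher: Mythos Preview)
Your proposal is correct and matches the paper's own proof essentially line for line: both substitute $\delta\vbz_{\mathrm{NL}}(\vbx)=\vbz(\vbx)-W^T\vbx$, expand the cross-covariance by bilinearity into $W^T\Cov[\vbx][\vbx,\vbz^T(\vbx)]-W^T\Sigma_{\vbx}W$, and then use the defining relation for $W$ to rewrite the first term as $W^T\Sigma_{\vbx}W$, yielding zero. Your remark that ``independence'' here is really just vanishing covariance is also apt, since that is exactly the criterion the paper establishes.
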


\begin{proof}
We show the covariance of $W^T\vbx$ and $\delta\vbz_{\mathrm{NL}}(\vbx)$ is zero:
\begin{equation}
\begin{aligned}
\Cov[\vbx][W^T\vbx, \delta\vbz_{\mathrm{NL}}(\vbx)^T] &= \Cov[\vbx][W^T\vbx, \vbz^T(\vbx) - \vbx^TW]\\
&= W^T\Cov[\vbx][\vbx, \vbz^T(\vbx) ] - W^T\Cov[\vbx][\vbx, \vbx^T ]W\\
&= W^T\Sigma_{\vbx}W - W^T\Sigma_{\vbx}W\\
&= 0.
\end{aligned}
\end{equation}
\end{proof}

\begin{theorem}
The label predictions can be expressed in terms of the geometric decomposition in Eq.~\eqref{eq:yhat_decomp}. 
\end{theorem}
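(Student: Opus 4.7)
The plan is to obtain the claimed decomposition by direct substitution: take the closed-form solution of Eq.~\eqref{eq:solution} and plug in the two statistical decompositions already established in Eqs.~\eqref{eq:ydecomp} and~\eqref{eq:zdecomp}, then group terms according to whether they involve the linear or the nonlinear pieces.

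First I would rewrite the training labels in vector form by applying Eq.~\eqref{eq:ydecomp} row-by-row on the training set, yielding $\vby = X\vbbeta + \delta\vby^*_{\mathrm{NL}} + \vbeps$, where $\delta\vby^*_{\mathrm{NL}}$ is the length-$M$ vector with entries $\delta y^*_{\mathrm{NL}}(\vbx_a)$. Next I would substitute the model-feature decomposition Eq.~\eqref{eq:zdecomp} for the test-point features, $\vbz(\vbx) = W^T\vbx + \delta\vbz_{\mathrm{NL}}(\vbx)$, into the prediction formula $\hat{y}(\vbx) = \vbz(\vbx)\cdot Z^+\vby$, producing
\begin{equation*}
\hat{y}(\vbx) = \vbx^T W Z^+ \vby + \delta\vbz_{\mathrm{NL}}^T(\vbx)\, Z^+\vby.
\end{equation*}
I would then expand $\vby$ in the first term using the label decomposition to obtain
\begin{equation*}
\hat{y}(\vbx) = \vbx^T W Z^+ X \vbbeta + \vbx^T W Z^+\bigl(\delta\vby^*_{\mathrm{NL}} + \vbeps\bigr) + \delta\vbz_{\mathrm{NL}}^T(\vbx)\, Z^+\vby.
\end{equation*}

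The final step is to reinterpret the leading scalar. Since $\vbx^T W Z^+ X \vbbeta = \bigl((WZ^+X)^T\vbx\bigr)\cdot\vbbeta$, setting $P_f \equiv (WZ^+X)^T$ and $\hbx \equiv P_f \vbx$ identifies this term as $\hbx\cdot\vbbeta$, while the two remaining terms are precisely the definition of $\delta\hat{y}(\vbx)$ in Eq.~\eqref{eq:yhat_decomp}. This establishes the claimed decomposition.

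There is no real obstacle here: the result is a bookkeeping identity and the only thing to be careful about is writing the scalar $\vbx^T W Z^+ X \vbbeta$ as a dot product against $\vbbeta$ so that the operator $P_f$ emerges with the correct transpose, matching the definition $P_f = (WZ^+X)^T$ that makes $P_f$ act as a map on input-feature space (i.e., $\hbx = P_f\vbx$) rather than on parameter space. No properties of the pseudoinverse beyond its defining role in Eq.~\eqref{eq:solution} are needed, and the statistical independence of the linear and nonlinear pieces (Theorems~1 and~2) is not invoked here; it becomes relevant only later when one averages the bias--variance decomposition in Eq.~\eqref{eq:bv_decomp}.
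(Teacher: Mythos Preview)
Your proposal is correct and follows essentially the same route as the paper: direct substitution of the decompositions in Eqs.~\eqref{eq:ydecomp} and~\eqref{eq:zdecomp} into Eq.~\eqref{eq:solution}, followed by regrouping so that the leading term $\vbx^T WZ^+X\vbbeta$ is rewritten as $\hbx\cdot\vbbeta$ with $\hbx=P_f\vbx$. Your choice to expand $\vby$ only in the first term (leaving $\delta\vbz_{\mathrm{NL}}^T(\vbx)\,Z^+\vby$ intact) is in fact slightly tidier than the paper, which expands and then re-collapses that piece, but the argument is otherwise identical.
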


\begin{proof}
We start with the OLS solution for the label predictions in Eq.~\eqref{eq:solution} and then substitute the label decomposition in Eq.~\eqref{eq:ydecomp} and model feature decomposition in Eq.~\eqref{eq:zdecomp} to get
\begin{equation}
\begin{aligned}
\hat{y}(\vbx) &= \vbz(\vbx)\cdot Z^+\vby\\
&= (W^T\vbx + \delta \vbz_{\mathrm{NL}}(\vbx))\cdot Z^+ (X\vbbeta + \delta \vby^*_{\mathrm{NL}} + \vbeps)\\
&= \vbx\cdot WZ^+X \vbbeta +  \delta \vbz_{\mathrm{NL}}(\vbx)^TZ^+(X\vbbeta + \delta \vby^*_{\mathrm{NL}} + \vbeps) + \vbx^T WZ^+(\delta \vby^*_{\mathrm{NL}} + \vbeps)\\
&= \vbx\cdot P_f^T \vbbeta +  \delta \vbz_{\mathrm{NL}}(\vbx)^TZ^+\vby + \vbx^T WZ^+(\delta \vby^*_{\mathrm{NL}} + \vbeps)\\
&= \hbx\cdot \vbbeta + \delta \hat{y}(\vbx),
\end{aligned}
\end{equation}
where where have defined the vector $\hbx \equiv P_f\vbx$, the matrix operator $P_f \equiv ( WZ^+X)^T$, and the scalar quantity ${\delta \hat{y}(\vbx) \equiv  \delta \vbz_{\mathrm{NL}}(\vbx)^TZ^+\vby + \vbx^T WZ^+(\delta \vby^*_{\mathrm{NL}} + \vbeps)}$.
\end{proof}

\begin{theorem}
The operator $P_f=X^+X$ for model features without basis functions is an orthogonal projector whose singular value decomposition is
\begin{equation}
P_f = \sum_i \vbf_{X, i}\vbf_{X, i}^T
\end{equation}
with subspace orientation angles $\theta_i = 0$ and undefined projection deviation angles $\delta\phi_i$.
\end{theorem}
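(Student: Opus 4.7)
The plan is to specialize the general formula $P_f = (WZ^+X)^T$ to the trivial basis $\vbz(\vbx) = \vbx$. With this choice, the decomposition in Eq.~\eqref{eq:zdecomp} forces $W = I_{N_f}$ and $\delta\vbz_{\mathrm{NL}}(\vbx) = 0$, so that $Z = X$ and hence $P_f = (I_{N_f} X^+ X)^T = (X^+X)^T$. From here I would invoke the standard Moore-Penrose identities $(X^+X)^T = X^+X$ and $(X^+X)^2 = X^+X$, which are immediate consequences of the defining axioms of the pseudoinverse, to conclude that $P_f$ is an orthogonal projector.

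For the singular value decomposition, I would compute directly using the (thin) SVD $X = U\Sigma V^T$. Then $X^+ = V\Sigma^+ U^T$, and
\begin{equation*}
P_f = X^+X = V\Sigma^+ U^T U \Sigma V^T = V(\Sigma^+\Sigma)V^T.
\end{equation*}
Since $\Sigma^+\Sigma$ is diagonal with a $1$ in every index corresponding to a nonzero singular value of $X$ and $0$ elsewhere, this simplifies to $P_f = \sum_{i:\sigma_i>0}\vbv_i\vbv_i^T$, where the $\vbv_i$ are the right singular vectors of $X$. These right singular vectors form an orthonormal basis for the row space of $X$, which is precisely $\mathcal{F}_X$, so identifying $\vbf_{X,i} = \vbv_i$ yields the claimed form $P_f = \sum_i \vbf_{X,i}\vbf_{X,i}^T$.

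Finally, I would read off the two angles by comparing with the general SVD $P_f = \sum_i \sigma_i \vbf_{X,i}\vbf_{W,i}^T$. The comparison shows that every singular value equals $1$ and that the left and right singular vectors coincide, $\vbf_{W,i} = \vbf_{X,i}$, so the subspace orientation angle is $\theta_i = 0$ for all $i$. To handle $\delta\phi_i$, I would evaluate the two-step mapping at $\vbx = \vbf_{W,i}$: the intermediate representation is $\hbx_{W,i} = \vbf_{W,i}\vbf_{W,i}^T\vbf_{W,i} = \vbf_{W,i}$, while $\hbx_i = \sigma_i\vbf_{X,i}\vbf_{W,i}^T\vbf_{W,i} = \vbf_{X,i} = \vbf_{W,i}$, so $\hbx_i - \hbx_{W,i}$ is the zero vector and the angle it makes with $\vbf_{W,i}$ (hence $\delta\phi_i$ itself) is undefined. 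There is no real obstacle here; the whole argument is bookkeeping with the pseudoinverse SVD, and the only mildly subtle point is recognizing that the undefined nature of $\delta\phi_i$ reflects the degenerate geometry in which the two subspaces coincide.
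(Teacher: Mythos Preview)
Your proposal is correct and reaches the same conclusions as the paper, but by a slightly different route. The paper begins from the abstract SVD $P_f = \sum_i \sigma_i \vbf_{X,i}\vbf_{W,i}^T$, establishes $P_f^2 = P_f = P_f^T$ from the pseudoinverse axioms, and then uses the identity $P_f = P_f^T P_f$ together with the orthonormality of the singular vectors to force $\vbf_{X,i} = \sigma_i \vbf_{W,i}$, whence $\sigma_i = 1$ and $\vbf_{X,i} = \vbf_{W,i}$. You instead compute $X^+X$ constructively from the thin SVD $X = U\Sigma V^T$ and read off the singular vectors of $P_f$ directly as the right singular vectors of $X$. Your approach is more concrete and makes explicit why the $\vbf_{X,i}$ span the row space $\mathcal{F}_X$; the paper's is a touch more uniform with its treatment of the oblique case (Theorem~5), where the same ``constrain the abstract SVD via the projector identity'' maneuver is reused. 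Both handle $\theta_i$ and $\delta\phi_i$ identically.
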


\begin{proof}
We start with the general form of the singular value decomposition of $P_f$, given in Eq.~\eqref{eq:SVD},
\begin{equation}
P_f = \sum_i \sigma_i \vbf_{X, i}\vbf_{W, i}^T,
\end{equation}
where $\vbf_{W, i}\cdot \vbf_{W, j} = \delta_{ij}$, $\vbf_{X, i}\cdot \vbf_{X, j} = \delta_{ij}$ and $\sigma_i > 0$.

It is evident that $P_f$ is an orthogonal projector since 
\begin{equation}
P_f^2 = X^+XX^+X = X^+X = P_f
\end{equation}
and 
\begin{equation}
P_f^T = (X^+X)^T = X^+X = P_f
\end{equation}
using the properties of the pseudoinverse.

Combining these two properties,
\begin{equation}
\begin{aligned}
 P_f &=  P_f^TP_f\\
\sum_j \sigma_j \vbf_{X, j}\vbf_{W, j}^T &= \sum_{jk} \sigma_j \sigma_k \vbf_{W, j}\vbf_{X, j}^T\vbf_{X, k}\vbf_{W, k}^T\\
\sum_j \sigma_j \vbf_{X, j}\vbf_{W, j}^T &= \sum_k \sigma_k^2\vbf_{W, k}\vbf_{W, k}^T.
\end{aligned}
\end{equation}
We then right-multiply this equation by the vector $\vbf_{W, i}$ to find
\begin{equation}
\begin{aligned}
\sigma_i\vbf_{X, i} &= \sigma_i^2 \vbf_{W, i}\\
\vbf_{X, i} &= \sigma_i \vbf_{W, i}.
\end{aligned}
\end{equation}
Since these two vector are unit vectors and $\sigma_i$ is positive, it must be that $\vbf_{X, i}= \vbf_{W, i}$ and $\sigma_i = 1$,
resulting in the singular value decomposition
\begin{equation}
P_f = \sum_i \vbf_{X, i}\vbf_{X, i}^T.
\end{equation}

The subspace orientation angles $\theta_i$ are found using their definition,
\begin{equation}
 \cos\theta_i = \vbf_{X, i}\cdot \vbf_{W, i} = 1,
\end{equation}
indicating that $\theta_i = 0$.

In addition, the projection deviation angles $\delta \phi_i$ are defined as
\begin{equation}
\cos\qty(\frac{\pi}{2} - \delta\phi_i) = \frac{\vbf_{W, i}\cdot(\hbx_i - \hbx_{W, i})}{\norm{\hbx_i - \hbx_{W, i}}}
\end{equation}
with $\vbx = \vbf_{W, i}$.
However, the vector in the numerator has zero magnitude,
\begin{equation}
\hbx_i - \hbx_{W, i} = \sigma_i\vbf_{X, i} - \vbf_{W, i} = 0,
\end{equation}
so the angles $\delta \phi_i$ are undefined.
\end{proof}

\begin{theorem}
The operator $P_f=(W(XW)^+X)^T$ for linear features is an oblique projector whose singular value decomposition is
\begin{equation}
P_f = \sum_i \frac{1}{\cos\theta_i}\vbf_{X, i}\vbf_{W, i}^T
\end{equation}
with $\vbf_{W, i}\cdot\vbf_{X, i}=\delta_{ij} \cos\theta_i $ and   $\delta\phi_i = 0$.
\end{theorem}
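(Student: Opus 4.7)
The plan is to prove the three claims—idempotency, the SVD form, and the angle identities—in sequence, using the geometric identification of $P_f$ as an oblique projector as the bridge.

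For idempotency, I would work with $P_f^T = W(XW)^+X$ and apply the Moore--Penrose identity $AA^+A = A$ with $A = XW$ to compute
\begin{equation*}
(P_f^T)^2 = W(XW)^+(XW)(XW)^+X = W(XW)^+X = P_f^T,
\end{equation*}
so that transposing yields $P_f^2 = P_f$. Non-symmetry is automatic because $P_f$ treats $X$ and $W$ asymmetrically. To locate the range and kernel, I observe that $P_f = X^T((XW)^+)^TW^T$ has range inside the column space of $X^T$, which is $\mathcal{F}_X$, while any $\vbx$ orthogonal to $\mathcal{F}_W$ satisfies $W^T\vbx = 0$ and hence $P_f\vbx = 0$. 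Under the generic rank assumption that $\mathrm{rank}(XW) = \min(\dim\mathcal{F}_X, \dim\mathcal{F}_W)$, this pins $P_f$ down as the oblique projector onto $\mathcal{F}_X$ along $\mathcal{F}_W^\perp$; equivalently, for every $\vbx \in \mathcal{F}_W$, $P_f\vbx \in \mathcal{F}_X$ with $P_f\vbx - \vbx \perp \mathcal{F}_W$, which is exactly the two-step picture in Fig.~2(b).

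To extract the SVD, I would introduce the principal vectors $\{\vbf_{W,i}\}\subset\mathcal{F}_W$ and $\{\vbf_{X,i}\}\subset\mathcal{F}_X$ associated with the principal angles $\theta_i$; by construction these are orthonormal within their respective subspaces and satisfy $\vbf_{W,i}\cdot\vbf_{X,j} = \delta_{ij}\cos\theta_i$. Since $P_f\vbf_{W,j}\in\mathcal{F}_X$, I expand $P_f\vbf_{W,j} = \sum_i c_{ij}\vbf_{X,i}$ and impose the oblique-projection condition $P_f\vbf_{W,j} - \vbf_{W,j} \perp \vbf_{W,k}$ for each $k$; the inner product collapses the linear system to $c_{kj}\cos\theta_k = \delta_{kj}$, so $P_f\vbf_{W,j} = \vbf_{X,j}/\cos\theta_j$. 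Combined with the vanishing of $P_f$ on $\mathcal{F}_W^\perp$, this reads off the claimed SVD $P_f = \sum_i(1/\cos\theta_i)\vbf_{X,i}\vbf_{W,i}^T$, and the positive coefficients plus orthonormality of the two bases confirm it is a genuine SVD. The projection deviation angle then follows immediately from $\hbx_i - \hbx_{W,i} = \vbf_{X,i}/\cos\theta_i - \vbf_{W,i}$ having inner product $\cos\theta_i/\cos\theta_i - 1 = 0$ with $\vbf_{W,i}$, so $\delta\phi_i = 0$.

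The main obstacle I expect is handling non-generic rank behavior, particularly in the over-parameterized regime where $XW$ loses column rank and $\ker P_f$ can strictly exceed $\mathcal{F}_W^\perp$, making $P_f$ a projector onto a proper subspace of $\mathcal{F}_X$ rather than all of it. I would resolve this by restricting both the principal-vector construction and the SVD sum to the indices with $\sigma_i > 0$ as in Eq.~(9), so that directions the model fails to express or the training set fails to sample simply drop out of both sides of the identity while the identification continues to hold on the subspaces that actually participate.
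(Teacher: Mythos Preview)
Your argument is correct but takes a genuinely different route from the paper. The paper proceeds purely algebraically: it writes down the abstract SVD $P_f = \sum_i \sigma_i \vbf_{X,i}\vbf_{W,i}^T$, substitutes into the idempotency relation $P_f = P_f^2$, and contracts with $\vbf_{X,i}^T$ on the left and $\vbf_{W,j}$ on the right to obtain $\sigma_i\delta_{ij} = \sigma_i\sigma_j\,\vbf_{W,i}\cdot\vbf_{X,j}$, from which $\vbf_{W,i}\cdot\vbf_{X,j} = \delta_{ij}/\sigma_i$ and hence $\sigma_i = 1/\cos\theta_i$ drop out directly; the $\delta\phi_i$ computation is then identical to yours. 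You instead first identify $P_f$ geometrically as the oblique projector onto $\mathcal{F}_X$ along $\mathcal{F}_W^\perp$, import principal-angle theory to produce biorthogonal bases with $\vbf_{W,i}\cdot\vbf_{X,j} = \delta_{ij}\cos\theta_i$ already built in, and then use the defining condition $P_f\vbf_{W,j} - \vbf_{W,j} \perp \mathcal{F}_W$ to read off $P_f\vbf_{W,j} = \vbf_{X,j}/\cos\theta_j$. The paper's approach is self-contained and requires no external machinery beyond the SVD, while yours is more conceptual and actually proves something slightly stronger: it shows that the angles $\theta_i$ defined in the paper as angles between singular-vector pairs coincide with the classical principal angles between $\mathcal{F}_X$ and $\mathcal{F}_W$. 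Your handling of rank degeneracy by restricting to indices with $\sigma_i > 0$ matches the paper's convention in Eq.~(9).
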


\begin{proof}
We start with the general form of the singular value decomposition of $P_f$, given in Eq.~\eqref{eq:SVD},
\begin{equation}
P_f = \sum_i \sigma_i \vbf_{X, i}\vbf_{W, i}^T,
\end{equation}
where $\vbf_{W, i}\cdot \vbf_{W, j} = \delta_{ij}$, $\vbf_{X, i}\cdot \vbf_{X, j} = \delta_{ij}$ and $\sigma_i > 0$.

Clearly, $P_f$ is an oblique projector since
\begin{equation}
\begin{aligned}
P_f^2 &= (W(XW)^+X)^T(W(XW)^+X)^T\\
& = X^T(W^TX^T)^+W^TX^T(W^TX^T)^+W^T\\
& = X^T(W^TX^T)^+W^T\\
& = (W(XW)^+X)^T\\
& = P_f.
\end{aligned}
\end{equation}

Using this property, we find
\begin{equation}
\begin{aligned}
 P_f &= P_f^2\\
\sum_k \sigma_k \vbf_{X, k}\vbf_{W, k}^T &= \sum_{kl} \sigma_k \sigma_l \vbf_{X, k}\vbf_{W, k}^T\vbf_{X, l}\vbf_{W, l}^T.
\end{aligned}
\end{equation}
We then left-multiply by $\vbf_{X, i}^T$ and right-multiply by $\vbf_{W, j}$, giving us
\begin{equation}
\begin{aligned}
\sigma_i\delta_{ij} &= \sigma_i^2\vbf_{W, i}^T\vbf_{X, j},
\end{aligned}
\end{equation}
from which we find that $\vbf_{W, i}\cdot\vbf_{X, j} = \frac{1}{\sigma_i}\delta_{ij}$, where we have used the fact that $\sigma_i$ is positive to choose the sign.

Based on the definition of the subspace orientation angles $\theta_i$, 
\begin{equation}
\cos\theta_i = \vbf_{X, i}\cdot \vbf_{W, i}  = \frac{1}{\sigma_i},
\end{equation}
so the singular value decomposition is
\begin{equation}
P_f = \sum_i \frac{1}{\cos\theta_i}\vbf_{X, i}\vbf_{W, i}^T
\end{equation}
with $\vbf_{W, i}\cdot\vbf_{X, j} = \cos\theta_i \delta_{ij}$.

We split the analysis of the projection deviation angles $\delta \phi_i$ into two cases.
First, when $\theta_i > 0$, we have
\begin{equation}
\begin{aligned}
\cos\qty(\frac{\pi}{2} - \delta\phi_i) &= \frac{\vbf_{W, i}\cdot(\hbx_i - \hbx_{W, i})}{\norm{\hbx_i - \hbx_{W, i}}}\\
&= \frac{\vbf_{W, i}\cdot(\sigma_i\vbf_{X, i} - \vbf_{W, i})}{\norm*{\sigma_i\vbf_{X, i} - \vbf_{W, i}}}\\
&= \frac{1- 1}{\norm*{\sigma_i\vbf_{X, i} - \vbf_{W, i}}}\\
&= 0.
\end{aligned}
\end{equation}
\end{proof}
However, when $\theta_i = 0$, the denominator $\norm{\hbx_i - \hbx_{W, i}}$ is zero, so $\delta \phi_i$ is undefined. In this case, we are free to take $\delta \phi_i$ to be zero.

\begin{theorem}
The operator $P_f = (WZ^+X)^T$ for arbitrary nonlinear features has a singular value decomposition of the form
\begin{equation}
P_f = \sum_i \frac{\cos\delta\phi}{\cos(\theta_i+\delta\phi_i)}\vbf_{X, i}\vbf_{W, i}^T.
\end{equation}
\end{theorem}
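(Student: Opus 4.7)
The plan is to recycle the triangle geometry of Fig.~\ref{fig:feature_space}(d): the three points $O$, $\hbx_{W,i}$, and $\hbx_i$ form a planar triangle whose interior angles can all be read off from the definitions of $\theta_i$ and $\delta\phi_i$, and then $\sigma_i$ falls out of a single application of the law of sines. No explicit manipulation of $Z^+$ is needed; the argument follows the spirit of the orthogonal and oblique cases above but cannot invoke either of the projector identities $P_f^2=P_f$ or $P_f^T=P_f$, so the relationship $\sigma_i\leftrightarrow(\theta_i,\delta\phi_i)$ has to be extracted purely geometrically.

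First, I would start from the generic SVD $P_f=\sum_i\sigma_i\vbf_{X,i}\vbf_{W,i}^T$, with $\{\vbf_{X,i}\}$ and $\{\vbf_{W,i}\}$ orthonormal and $\sigma_i>0$. Writing $P_f=X^T(Z^+)^TW^T$ immediately confines the column space of $P_f$ to $\mathcal{F}_X$ and its row space to $\mathcal{F}_W$, so the left and right singular vectors automatically live in the advertised subspaces. Next I probe the $i$th term by feeding $\vbx=\vbf_{W,i}$ into the SVD: orthonormality collapses $\hbx_{W,i}$ to $\vbf_{W,i}$ and $\hbx_i$ to $\sigma_i\vbf_{X,i}$, and these two vectors, together with the origin, span a plane containing the triangle of interest.

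The angle at $O$ is $\theta_i$ by the definition $\cos\theta_i=\vbf_{W,i}\cdot\vbf_{X,i}$. To pin down the angle at $\hbx_{W,i}$, I would compute $(-\vbf_{W,i})\cdot(\sigma_i\vbf_{X,i}-\vbf_{W,i})/\|\sigma_i\vbf_{X,i}-\vbf_{W,i}\|$ and match the result against $\sin\delta\phi_i=\vbf_{W,i}\cdot(\hbx_i-\hbx_{W,i})/\|\hbx_i-\hbx_{W,i}\|$; the cosine of the interior angle comes out as $-\sin\delta\phi_i$, identifying the interior angle as $90^\circ+\delta\phi_i$. Subtracting from $180^\circ$ leaves $90^\circ-\theta_i-\delta\phi_i$ at $\hbx_i$. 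The law of sines applied with $|O\hbx_{W,i}|=1$ and $|O\hbx_i|=\sigma_i$ then yields $\sigma_i=\cos\delta\phi_i/\cos(\theta_i+\delta\phi_i)$, which substituted into the generic SVD gives the claim.

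The main obstacle I expect is the sign bookkeeping when translating the definition of $\delta\phi_i$ (measured against $\vbf_{W,i}$, which points away from the origin) into the triangle's \emph{interior} angle at $\hbx_{W,i}$ (measured against $-\vbf_{W,i}$); getting the relative orientation wrong would swap $+\delta\phi_i$ for $-\delta\phi_i$ and break both the linear-features limit ($\delta\phi_i=0\Rightarrow\sigma_i=1/\cos\theta_i$) and the orthogonal-features limit. Degenerate cases in which $\hbx_i=\hbx_{W,i}$ leave $\delta\phi_i$ undefined, and exactly as in the oblique proof above, they can be assigned $\delta\phi_i=0$ by continuity without affecting the statement.
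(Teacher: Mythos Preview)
Your proposal is correct and follows essentially the same route as the paper: both feed $\vbx=\vbf_{W,i}$ into the generic SVD to build the planar triangle $(O,\vbf_{W,i},\sigma_i\vbf_{X,i})$, read off $\theta_i$ and $\delta\phi_i$ from the definitions, and invert to obtain $\sigma_i$. The only difference is cosmetic: the paper writes the defining relation $\sin\delta\phi_i=(\sigma_i\cos\theta_i-1)/\sqrt{1+\sigma_i^2-2\sigma_i\cos\theta_i}$ and then algebraically solves for $\sigma_i$ (invoking a sign choice to land on $\cos(\theta_i+\delta\phi_i)$ in the denominator), whereas you make the law of sines explicit, which handles the sign bookkeeping more transparently and recovers the linear and orthogonal limits automatically.
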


\begin{proof}

We start with the definition of the projection deviation angles $\delta \phi_i$,
\begin{equation}
\begin{aligned}
\cos\qty(\frac{\pi}{2} - \delta\phi_i) &= \frac{\vbf_{W, i}\cdot(\hbx_i - \hbx_{W, i})}{\norm{\hbx_i - \hbx_{W, i}}}\\
&= \frac{\vbf_{W, i}\cdot(\sigma_i\vbf_{X, i}-\vbf_{W, i})}{\norm*{\sigma_i\vbf_{X, i}-\vbf_{W, i}}}\\
&=  \frac{\sigma_i\cos\theta_i-1}{\sqrt{1+\sigma_i^2-2\sigma_i\cos\theta_i}}
\end{aligned}
\end{equation}
where we have also used the definition of the subspace orientation angle $\theta_i$ as $\cos\theta_i = \vbf_{X, i}\cdot \vbf_{W, i}$.
Next, we solve the above equation for $\sigma_i$ to find
\begin{equation}
\begin{aligned}
\sigma_i &=  \frac{\sin\qty(\frac{\pi}{2} - \delta\phi_i)}{\sin\qty(\theta_i +\frac{\pi}{2} - \delta\phi_i)}\\
&= \frac{\cos \delta\phi_i}{\cos(\theta_i + \delta\phi_i)},
\end{aligned}
\end{equation}
where we have chosen the sign so that singular values diverge when $\theta_i+\delta\phi_i=\frac{\pi}{2}$.

The singular value decomposition is then 
\begin{equation}
P_f = \sum_i \frac{\cos\delta\phi}{\cos(\theta_i+\delta\phi_i)}\vbf_{X, i}\vbf_{W, i}^T.
\end{equation}
\end{proof}

\begin{theorem}
The geometric test error can be decomposed into the geometric bias and geometric variance as defined in  Eq.~\eqref{eq:bv_decomp}.
\end{theorem}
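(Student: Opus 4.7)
The plan is to recognize that this is a standard bias–variance identity applied to the scalar random variable $\hbx \cdot \vbbeta$, whose only source of randomness is the training set $\mathcal{D}$ (both $\vbx$ and $\vbbeta$ are held fixed when computing $\E_{\mathcal{D}}[\,\cdot\,]$). Once this is identified, the decomposition follows from adding and subtracting the mean, expanding the square, and observing that the cross term vanishes.

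First I would write $\mathcal{E}_{\mathrm{geom}} = (\Delta \vbx \cdot \vbbeta)^2 = (\vbx\cdot\vbbeta - \hbx\cdot\vbbeta)^2$, using the definition $\Delta\vbx = (I_{N_f} - P_f)\vbx = \vbx - \hbx$. Then I would introduce $\mu \equiv \E_{\mathcal{D}}[\hbx\cdot\vbbeta]$ and rewrite
\begin{equation}
\vbx\cdot\vbbeta - \hbx\cdot\vbbeta = \bigl(\vbx\cdot\vbbeta - \mu\bigr) + \bigl(\mu - \hbx\cdot\vbbeta\bigr).
\end{equation}
Squaring and taking $\E_{\mathcal{D}}$, the cross term
\begin{equation}
2\bigl(\vbx\cdot\vbbeta - \mu\bigr)\,\E_{\mathcal{D}}\!\bigl[\mu - \hbx\cdot\vbbeta\bigr]
\end{equation}
is zero because $\vbx\cdot\vbbeta - \mu$ is deterministic in $\mathcal{D}$ and the remaining factor is the expectation of a mean-zero quantity by construction.

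The first surviving term is $(\vbx\cdot\vbbeta - \mu)^2 = (\E_{\mathcal{D}}[\vbx\cdot\vbbeta - \hbx\cdot\vbbeta])^2 = (\E_{\mathcal{D}}[\Delta\vbx\cdot\vbbeta])^2 = \Bias^2[\hbx\cdot\vbbeta]$, matching the stated definition. The second surviving term is $\E_{\mathcal{D}}[(\mu - \hbx\cdot\vbbeta)^2] = \E_{\mathcal{D}}[(\hbx\cdot\vbbeta)^2] - \mu^2 = \Var[\hbx\cdot\vbbeta]$, again matching the stated definition. Combining these gives the claimed identity.

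There is no real obstacle here; the only subtlety worth stating explicitly is the independence of $\vbx$ and $\vbbeta$ from $\mathcal{D}$, which is what allows $\vbx\cdot\vbbeta - \mu$ to be pulled out of the cross-term expectation and which justifies rewriting $\Bias[\hbx\cdot\vbbeta]$ as $\vbx\cdot\vbbeta - \E_{\mathcal{D}}[\hbx\cdot\vbbeta]$. Since the theorem statement treats $\mathcal{E}_{\mathrm{geom}}$ at a fixed test point $\vbx$ and takes the expectation only over $\mathcal{D}$, this assumption is already implicit, and the proof reduces to the two lines above.
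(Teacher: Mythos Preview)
Your proposal is correct and is essentially identical to the paper's own proof: both add and subtract $\E_{\mathcal{D}}[\hbx\cdot\vbbeta]$, expand the square, note that the cross term vanishes because it is the product of a $\mathcal{D}$-deterministic factor with a mean-zero factor, and then identify the two surviving pieces with $\Bias^2[\hbx\cdot\vbbeta]$ and $\Var[\hbx\cdot\vbbeta]$. The only cosmetic difference is the sign convention on $\Delta\vbx\cdot\vbbeta$, which is immaterial after squaring.
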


\begin{proof}
To derive the geometric bias-variance decomposition in Eq.~\eqref{eq:bv_decomp}, we start with the definition of the geometric test error. We find
\begin{equation}
\begin{aligned}
\E[\mathcal{D}][\mathcal{E}_{\mathrm{geom}}] &= \E[\mathcal{D}][(\Delta \vbx\cdot \vbbeta)^2]\\
&= \E[\mathcal{D}][(\hbx \cdot \vbbeta - \vbx\cdot \vbbeta)^2]\\
&= \E[\mathcal{D}][(\hbx \cdot \vbbeta - \E[\mathcal{D}][\hbx \cdot \vbbeta] +  \E[\mathcal{D}][\hbx \cdot \vbbeta] - \vbx\cdot \vbbeta)^2]\\
&= (\E[\mathcal{D}][\hbx \cdot \vbbeta] - \vbx\cdot \vbbeta)^2 +\E[\mathcal{D}][(\hbx \cdot \vbbeta - \E[\mathcal{D}][\hbx \cdot \vbbeta])^2]\\
&\qquad + 2(\E[\mathcal{D}][\hbx \cdot \vbbeta] - \vbx\cdot \vbbeta)\E[\mathcal{D}][\hbx \cdot \vbbeta - \E[\mathcal{D}][\hbx \cdot \vbbeta]]\\
&= (\E[\mathcal{D}][\hbx \cdot \vbbeta] - \vbx\cdot \vbbeta)^2 +\E[\mathcal{D}][(\hbx \cdot \vbbeta)^2] - \E[\mathcal{D}][\hbx \cdot \vbbeta]^2\\
&= \Bias^2[\hbx \cdot \vbbeta]+ \Var[][\hbx \cdot \vbbeta],
\end{aligned}
\end{equation}
where the geometric bias and variance are defined as 
\begin{equation}
\begin{aligned}
\Bias[\hbx \cdot \vbbeta] &= \E[\mathcal{D}][\hbx \cdot \vbbeta] - \vbx\cdot \vbbeta,\\
\Var[][\hbx \cdot \vbbeta] &= \E[\mathcal{D}][(\hbx \cdot \vbbeta)^2] - \E[\mathcal{D}][\hbx \cdot \vbbeta]^2.
\end{aligned}
\end{equation}
\end{proof}

\begin{theorem}
The geometric test error, geometric bias and geometric variance reduce to the standard definitions as presented in Ref.~\cite{Bishop2006} in the absence of label noise and nonlinearities.
\end{theorem}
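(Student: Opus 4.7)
The plan is to show that under the stated simplifications, both the true label $y(\vbx)$ and the prediction $\hat y(\vbx)$ collapse to purely linear expressions in $\vbx$, so that the standard squared-error, bias, and variance defined in terms of $(y,\hat y)$ coincide term-by-term with the geometric quantities defined in terms of $(\hbx\cdot\vbbeta,\vbx\cdot\vbbeta)$.

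First I would set $\varepsilon=0$, $\delta y^*_{\mathrm{NL}}(\vbx)=0$, and $\delta\vbz_{\mathrm{NL}}(\vbx)=0$ in the decompositions \eqref{eq:ydecomp}, \eqref{eq:zdecomp}, and \eqref{eq:yhat_decomp}. The teacher reduces to $y(\vbx)=\vbx\cdot\vbbeta$. In the expression for $\delta\hat y(\vbx)$ from \eqref{eq:yhat_decomp}, the first summand vanishes because $\delta\vbz_{\mathrm{NL}}(\vbx)=0$, and the second summand vanishes because $\delta\vby^*_{\mathrm{NL}}$ and $\vbeps$ are both zero on the training set (both assumptions are global). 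Thus $\hat y(\vbx)=\hbx\cdot\vbbeta$, and in particular $y(\vbx)-\hat y(\vbx)=(\vbx-\hbx)\cdot\vbbeta=\Delta\vbx\cdot\vbbeta$.

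Next I would plug these identities directly into the textbook definitions. For the test error, $\E_{\mathcal D}[(y(\vbx)-\hat y(\vbx))^2]=\E_{\mathcal D}[(\Delta\vbx\cdot\vbbeta)^2]$, which is exactly $\E_{\mathcal D}[\mathcal E_{\mathrm{geom}}]$. For the variance, $\E_{\mathcal D}[\hat y^2]-\E_{\mathcal D}[\hat y]^2$ immediately becomes $\E_{\mathcal D}[(\hbx\cdot\vbbeta)^2]-\E_{\mathcal D}[\hbx\cdot\vbbeta]^2=\Var[\hbx\cdot\vbbeta]$. For the bias, $\E_{\mathcal D}[\hat y]-y=\E_{\mathcal D}[\hbx\cdot\vbbeta]-\vbx\cdot\vbbeta$, matching the geometric bias used in the proof of Theorem~7 (up to the sign convention with $\Delta\vbx=\vbx-\hbx$, which is immaterial since only $\mathrm{Bias}^2$ appears in the decomposition).

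There is no genuine obstacle here; the entire content is bookkeeping once one confirms that every piece of $\delta\hat y$ actually drops out under the stated hypotheses. The only point that deserves a remark is why the $Z^+\vby$ term inside $\delta\hat y$ vanishes: it is not because $Z^+\vby$ is zero, but because its coefficient $\delta\vbz_{\mathrm{NL}}^T(\vbx)$ is zero at the test point $\vbx$ when there are no model nonlinearities. I would call this out explicitly so the reader sees that the reduction uses the linearity of $\vbz$ at the test point in addition to the noiselessness of the training labels.
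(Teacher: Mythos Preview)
Your proposal is correct and essentially matches the paper's argument. The only cosmetic difference is ordering: the paper first expands the standard test error, bias, and variance in full generality (carrying along the $\delta y^*_{\mathrm{NL}}$, $\delta\hat y$, and $\varepsilon$ terms) and then observes these extra pieces vanish under the hypotheses, whereas you impose the hypotheses at the outset so that $y=\vbx\cdot\vbbeta$ and $\hat y=\hbx\cdot\vbbeta$ before substituting. Both routes are the same bookkeeping, and your remark about why the $\delta\vbz_{\mathrm{NL}}^T(\vbx)Z^+\vby$ term drops out is a nice clarification.
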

\begin{proof}
We substitute the decompositions for the true and predicted labels in Eqs.~\eqref{eq:ydecomp} and \eqref{eq:yhat_decomp} into the standard definitions of test error, bias and variance, allowing us  to write them in terms of their geometric counterparts. The total test error averaged over the training data set $\mathcal{D}$ and the label noise of the test data point $\varepsilon$ (assumed to have zero mean) is
\begin{equation}
\begin{aligned}
\E[\mathcal{D}, \varepsilon][\mathcal{E}_{\mathrm{test}}] &= \E[\mathcal{D}, \varepsilon][(y(\vbx) -\hat{y}(\vbx))^2]\\
 &= \E[\mathcal{D}, \varepsilon][(\vbx\cdot\vbbeta + \delta y^*_{\mathrm{NL}}(\vbx) + \varepsilon - \hbx\cdot\vbbeta - \delta\hat{y}(\vbx))^2]\\
&= \E[\mathcal{D}, \varepsilon][(\Delta\vbx\cdot\vbbeta + \delta y^*_{\mathrm{NL}}(\vbx) + \varepsilon - \delta\hat{y}(\vbx))^2]\\
&= \E[\mathcal{D}][(\Delta\vbx\cdot\vbbeta)^2] + \E[\mathcal{D}][(\delta y^*_{\mathrm{NL}}(\vbx)- \delta\hat{y}(\vbx))^2]\\
&\qquad + \E[\varepsilon][\varepsilon^2] + 2\E[\mathcal{D}][(\Delta\vbx\cdot\vbbeta)(\delta y^*_{\mathrm{NL}}(\vbx)- \delta\hat{y}(\vbx))]\\
&= \E[\mathcal{D}][\mathcal{E}_{\mathrm{geom}}] + \E[\mathcal{D}][(\delta y^*_{\mathrm{NL}}(\vbx)- \delta\hat{y}(\vbx))^2]\\
&\qquad + \E[\varepsilon][\varepsilon^2] + 2\E[\mathcal{D}][(\Delta\vbx\cdot\vbbeta)(\delta y^*_{\mathrm{NL}}(\vbx)- \delta\hat{y}(\vbx))].
\end{aligned}
\end{equation}
The bias is
\begin{equation}
\begin{aligned}
\Bias[\hat{y}(\vbx)] &= \E[\mathcal{D}][\hat{y}(\vbx)] - y^*(\vbx)\\
&=  \E[\mathcal{D}][\hbx\cdot\vbbeta + \delta\hat{y}(\vbx)] - \vbx\cdot\vbbeta - \delta y^*_{\mathrm{NL}}(\vbx)\\
&= -\E[\mathcal{D}][\Delta \vbx\cdot\vbbeta] +  \E[\mathcal{D}][\delta\hat{y}(\vbx)]- \delta y^*_{\mathrm{NL}}(\vbx)\\
&= -\Bias[\hbx\cdot\vbbeta ] +  \E[\mathcal{D}][\delta\hat{y}(\vbx)]- \delta y^*_{\mathrm{NL}}(\vbx).
\end{aligned}
\end{equation}
The variance is
\begin{equation}
\begin{aligned}
\Var[][\hat{y}(\vbx)] &= \Var[\mathcal{D}][\hat{y}(\vbx)]\\
&=  \Var[\mathcal{D}][\hbx\cdot\vbbeta + \delta\hat{y}(\vbx)]\\
&= \Var[\mathcal{D}][\hbx\cdot\vbbeta] +  \Var[\mathcal{D}][\delta\hat{y}(\vbx)] + 2\Cov[\mathcal{D}][\hbx\cdot\vbbeta, \delta\hat{y}(\vbx)].
\end{aligned}
\end{equation}
Clearly, these three quantities reduce to their geometric counterparts when $\varepsilon$, $\delta y^*_{\mathrm{NL}}(\vbx)$ and $\delta\vbz_{\mathrm{NL}}(\vbx)$ are zero.
\end{proof}

\begin{theorem}
When the maximum single singular value $\sigma_{\max}$ of $P_f$ diverges, the subspace orientation angle $\theta_{\max}$ approaches $90^\circ$, and the projection deviation angle $\delta \phi_{\max}$ approaches $0^\circ$, $P_f$ is approximately an oblique projector for arbitrary nonlinear model features.
\end{theorem}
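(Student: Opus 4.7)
The plan is to apply the SVD of $P_f$ from Theorem 7 and show that, in the stated regime, its dominant rank-one component reduces to exactly the form of a rank-one oblique projector. The key algebraic observation to exploit is that the nonlinear singular value formula $\sigma_i = \cos\delta\phi_i/\cos(\theta_i+\delta\phi_i)$ collapses onto the oblique projector relation $\sigma_i = 1/\cos\theta_i$ of Theorem 6 whenever $\delta\phi_i\to 0$. Specializing this to the dominant term and combining with $\theta_{\max}\to\pi/2$ yields $\cos\theta_{\max}\to 0$, consistent with $\sigma_{\max}\to\infty$, together with the crucial identity $\sigma_{\max}\cos\theta_{\max}\to 1$.

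First I would split the SVD in Eq.~\eqref{eq:noisy} into its dominant rank-one piece $P_{\max}\equiv\sigma_{\max}\vbf_{X,\max}\vbf_{W,\max}^T$ and a remainder $R=P_f-P_{\max}$, whose singular values are bounded by assumption. Since $\sigma_{\max}$ diverges while $\norm{R}$ remains $O(1)$, the rescaled operator $P_f/\sigma_{\max}$ converges to the rank-one matrix $\vbf_{X,\max}\vbf_{W,\max}^T$ in operator norm, so $P_f$ is effectively rank-one to leading order.

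Next I would verify that $P_{\max}$ is itself idempotent in the limit. Using $\vbf_{W,\max}\cdot\vbf_{X,\max}=\cos\theta_{\max}$, a direct calculation gives $P_{\max}^2 = \sigma_{\max}^2\cos\theta_{\max}\,\vbf_{X,\max}\vbf_{W,\max}^T = (\sigma_{\max}\cos\theta_{\max})\,P_{\max}$. Invoking the crucial identity $\sigma_{\max}\cos\theta_{\max}\to 1$, one obtains $P_{\max}^2\to P_{\max}$. The limiting operator then matches exactly the rank-one template of Theorem 6, with singular value $1/\cos\theta_{\max}$ and (trivial) biorthogonality $\vbf_{W,\max}\cdot\vbf_{X,\max}=\cos\theta_{\max}$, so the dominant component of $P_f$ is a genuine rank-one oblique projector in the limit.

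The main obstacle I anticipate is in making the word ``approximately'' rigorous: the cross terms $P_{\max}R$ and $RP_{\max}$ that appear in $P_f^2$ are themselves of order $\sigma_{\max}$ in general, so $\norm{P_f^2-P_f}$ need not vanish in operator norm. The cleanest remedy is to phrase the conclusion as convergence of the rescaled operator $P_f/\sigma_{\max}$ to the rank-one matrix $\vbf_{X,\max}\vbf_{W,\max}^T$, whose unnormalized form $\sigma_{\max}\vbf_{X,\max}\vbf_{W,\max}^T$ is exactly a rank-one oblique projector per Theorem 6; under this interpretation, the claim follows immediately from the preceding two steps.
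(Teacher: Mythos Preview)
Your proposal is correct and follows essentially the same route as the paper: isolate the dominant rank-one term $\sigma_{\max}\vbf_{X,\max}\vbf_{W,\max}^T$, use $\delta\phi_{\max}\to 0$ to reduce $\sigma_{\max}$ to $1/\cos\theta_{\max}$, and then verify idempotency via $\vbf_{W,\max}\cdot\vbf_{X,\max}=\cos\theta_{\max}$. The paper's version is actually less careful than yours---it simply drops the remainder $R$ without comment and squares the rank-one approximation directly, so your explicit discussion of the cross terms and the rescaled-convergence remedy goes beyond what the paper attempts.
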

\begin{proof}
We start with the general form of the singular value decomposition of $P_f$, given in Eq.~\eqref{eq:SVD},
\begin{equation}
P_f = \sum_i \sigma_i \vbf_{X, i}\vbf_{W, i}^T,
\end{equation}
where $\vbf_{W, i}\cdot \vbf_{W, j} = \delta_{ij}$, $\vbf_{X, i}\cdot \vbf_{X, j} = \delta_{ij}$ and $\sigma_i > 0$.

Now suppose a single singular value becomes much larger than the rest, $\sigma_{\max}$. In this case, the singular value decomposition approximates to
\begin{equation}
P_f \approx \sigma_{\max} \vbf_{X, \max}\vbf_{W, \max}^T.
\end{equation}
In addition if $\delta \phi_{\max}\approx 0^\circ$, then $\sigma_{\max}\approx \frac{1}{\cos\theta_{\max}}$, so
\begin{equation}
P_f \approx \frac{1}{\cos\theta_{\max}}\vbf_{X, \max}\vbf_{W, \max}^T.
\end{equation}

Squaring this operator then results in
\begin{equation}
\begin{aligned}
P_f^2 &= \frac{1}{\cos^2\theta_{\max}}\vbf_{X, \max}\vbf_{W, \max}^T\vbf_{X, \max}\vbf_{W, \max}^T\\
&= \frac{1}{\cos\theta_{\max}}\vbf_{X, \max}\vbf_{W, \max}^T\\
&= P_f,
\end{aligned}
\end{equation}
so $P_f$ is approximately an oblique projector.
\end{proof}

\section{Numerical Details}

\subsection{Subspace Schematics}

To generate the schematics of the training label space in Figs.~\ref{fig:data_space}(b)-(c) and the input feature space in Figs.~\ref{fig:feature_space}(a)-(c) and \ref{fig:adversarial_perturb}(a), we used the numerical scheme detailed in Sec.~\ref{sec:dd}.
In each of these figures, we plot the relative magnitudes of the  $\vbx$, $\hbx_W$, and $\hbx$ and the angles between them to scale.
We then used these vectors to orient the lines representing the different subspaces.
In each figure, we only depict the portions of the subspaces that have a one-to-one mapping via $P_f$, since these are the only directions that contribute to $\hbx$.

We note that in Figs.~\ref{fig:feature_space}(b)-(c) and \ref{fig:adversarial_perturb}(a) the vector $\vbx$, $\hbx_W$, and $\hbx$ are drawn to lie in the same plane for visual clarity, even though this is generally not the case.
While the angles between adjacent vectors are depicted to scale, the angles between non-adjacent vectors are not drawn accurately.

In addition, the intermediate vector in Figs.~\ref{fig:feature_space}(c) and \ref{fig:adversarial_perturb}(a), which lies between the subspaces $\mathcal{F}_X$ and $\mathcal{F}_W$, generally does not lie in the same place as $\hbx_W$ and $\hbx$.
In this case, the vector is an approximate 2$d$ representation found by finding the component of $\hbx-\hbx_W$ that is perpendicular to $\hbx_W$ and lies in the same plane as both  $\hbx_W$ and $\hbx$.

In the schematics of the singular vectors in Figs.~\ref{fig:feature_space}(c) and \ref{fig:data}(c)-(d), the relative magnitudes of the vectors $\hbx_{W, i}$ and $\hbx_i$ and the angles $\theta_i$ and $\delta\phi_i$ are drawn to scale.
In each figure, we show the singular vectors and angles corresponding to the largest singular value of $P_f$.

In Table~\ref{tab:schematics}, we report the model, the ratio of input features $N_p$ to training data points, and the ratio of fit parameters $N_p$ to training data points used to generate each schematic. In each case we use $M=512$ training data points and round $N_f$ and $N_p$ down to the nearest integer.

\begin{table}[h]
  \caption{{\bf Model Details for Projector Schematics}}
  \label{tab:schematics}
  \centering
  \begin{tabular}{llccc}
    \toprule
    Figure     & Model     & Input Features $N_f/M$ & Fit Parameters $N_p/M$ \\
    \midrule
    \ref{fig:data_space}(b) & Random Nonlinear Features  & $1/4$ &  $1/4$\\
    \ref{fig:data_space}(c) & Random Nonlinear Features  &  $1/4$ &  $3$ \\
    \ref{fig:feature_space}(a) & No Basis Functions    & $6/5$ & $6/5$ \\
    \ref{fig:feature_space}(b) & Random Linear Features   & $6/5$ & $2$ \\
     \ref{fig:feature_space}(c) & Random Nonlinear Features   & $6/5$ & $2$ \\
     \ref{fig:feature_space}(d) & Random Nonlinear Features   & $1/4$ & $6/5$ \\
     \ref{fig:data}(c) & Random Nonlinear Features  & $1/4$ & $5/6$ \\
    \ref{fig:data}(d) & Random Nonlinear Features  & $1/4$ & $1$ \\
    \ref{fig:data}(e) & Random Nonlinear Features  & $1/4$ & $6/5$ \\
    \ref{fig:adversarial_perturb}(a) & Random Nonlinear Features  & $6/5$ & $3$ \\
    \bottomrule
  \end{tabular}
\end{table}

\subsection{Bias-Variance Decompositions}

Here, we describe our scheme to efficiently generate plots of the geometric bias and geometric variance.
First, we express the squared geometric bias and geometric variance for a single test data point in terms of averages over two independent training data sets as 
\begin{equation}
\begin{aligned}
\Bias^2[\hbx\cdot\vbbeta] &= \E[\mathcal{D}_1, \mathcal{D}_2][(\Delta \vbx_1\cdot\vbbeta)(\Delta \vbx_2\cdot\vbbeta)]\\
\Var[][\hbx\cdot\vbbeta] &= \E[\mathcal{D}_1][(\hbx\cdot\vbbeta)^2] -  \E[\mathcal{D}_1, \mathcal{D}_2][( \vbx_1\cdot\vbbeta)(\vbx_2\cdot\vbbeta)]\label{eq:bv_approx}
\end{aligned}
\end{equation}
where $\hbx_1$ and $\hbx_2$ are the label predictions from training on the different training data sets.

As suggested by these formulas, during each simulation, we independently generate two training data sets of equal size, $\mathcal{D}_1$ and $\mathcal{D}_2$, and
perform regression separately on both, resulting in two sets of fit parameters $\hbw_1$ and $\hbw_2$, respectively.
Using the results from the first training set, we calculate the training error, test error, and geometric test error.
We then evaluate the geometric bias and geometric variance using the prescription above, but only for the fixed pair of training data sets (and averaged over the test data set).
We repeat this process for many independent simulations and then average all error quantities.
The resulting geometric bias and geometric variance then approximate to those in Eq.~\eqref{eq:bv_approx}, averaged over all random quantities.

\subsection{Adversarial Perturbations}

To generate Fig.~\ref{fig:adversarial_perturb}(b), we used the same model used to create the schematic in Fig.~\ref{fig:adversarial_perturb}(a).
We then generated a random test data point $\vbx$, along with a collection of $200$ random pairs of adversarial and invariant perturbations.
To create each pair of perturbations, we randomly generated a vector $\vec{\mathbf{e}}$ drawn form the same distribution as $\vbx$and decomposed it into its components parallel and perpendicular to $\mathcal{F}_W$,
\begin{equation}
\begin{aligned}
\hat{\mathbf{e}}_\parallel &= \frac{\sum_i\vbf_{W, i}\vbf_{W, i}^T\vec{\mathbf{e}}}{\norm*{\sum_i\vbf_{W, i}\vbf_{W, i}^T\vec{\mathbf{e}}}}\\
\hat{\mathbf{e}}_\perp &= \frac{(I_{N_f}-\sum_i\vbf_{W, i}\vbf_{W, i}^T)\vec{\mathbf{e}}}{\norm*{(I_{N_f}-\sum_i\vbf_{W, i}\vbf_{W, i}^T)\vec{\mathbf{e}}}}.
\end{aligned}
\end{equation}

Next, we calculated the derivatives of the true and predicted labels for each perturbation using finite difference,
\begin{equation}
\begin{aligned}
\dv{y(\vbx + \eta\hat{\mathbf{e}})}{\eta} &\approx \frac{y(\vbx + \eta\hat{\mathbf{e}}) - y(\vbx)}{\eta}\\
\dv{\hat{y}(\vbx + \eta\hat{\mathbf{e}})}{\eta} &\approx \frac{\hat{y}(\vbx + \eta\hat{\mathbf{e}}) - \hat{y}(\vbx)}{\eta},
\end{aligned}
\end{equation}
with  $\eta = 10^{-2}$.

\subsection{Additional Numerical Results}

Here, we provide more detailed numerical results for the three models we consider in this work.
We show results for the model without basis functions in Fig.~\ref{fig:SI_lf}, with random linear features in Fig.~\ref{fig:SI_rlf}, and random nonlinear features in Fig.~\ref{fig:SI_rnlf}. 
For each model, we show the training error, test error, geometric test error, geometric bias, and geometric variance.
We also plots the Frobenius ($L_{2,2}$) norms of the orthogonal complements of the data projector $I_M - P_\ell$ and feature operator $I_{N_f} - P_f$, along with the minimum singular value of $Z$, $\sigma_{Z, \min}$ and maximum singular value of $P_f$, $\sigma_{\max}$.
For the model with basis functions, we also plot the subspace orientation angle $\theta_{\max}$ and projection deviation angle $\delta \phi_{\max}$ corresponding to the maximum singular value of $P_f$.
Results are shown as a function of $N_p/M = N_f/M$ for the model without basis functions and as a function of both $N_p/M$ and $N_f/M$ for the two models with basis functions. 
%For each model, black dashed lines indicate phase transitions between different types of solutions.
For each model, black dashed lines indicate phase transitions between different types of solutions~\cite{Rocks2020}.
Each point or pixel is averaged over at least 100 independent simulations.

\begin{figure*}
\centering
\includegraphics[width=1.0\linewidth]{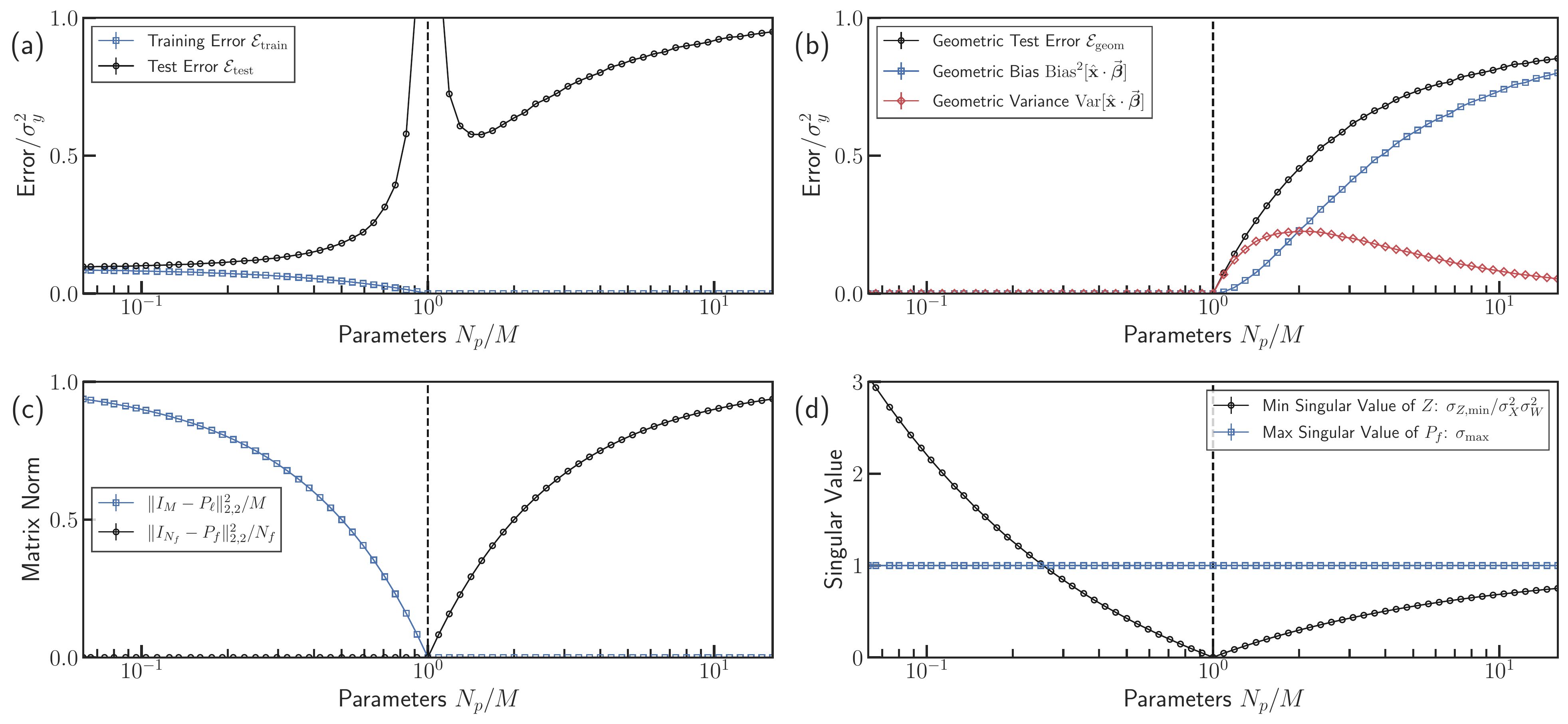}
\caption{{\bf No Basis Functions}. 
{\bf (a)} Average training error (blue squares) and  test error (black circles). 
{\bf (b)} Average geometric test error (black circles), geometric bias (blue squares), and geometric variance (red diamonds).
{\bf (c)} Average Frobenius matrix norm of the orthogonal complements of the data projector $I_M - P_\ell$ (blue squares) and feature operator $I_{N_f} - P_f$ (black circles).
{\bf (d)} Average minimum singular value of $Z$ (black circles) and maximum singular value of $P_f$ (blue squares).
Vertical dashed lines indicate the interpolation threshold.
 }
\label{fig:SI_lf}
\end{figure*}

\begin{figure*}
\centering
\includegraphics[width=1.0\linewidth]{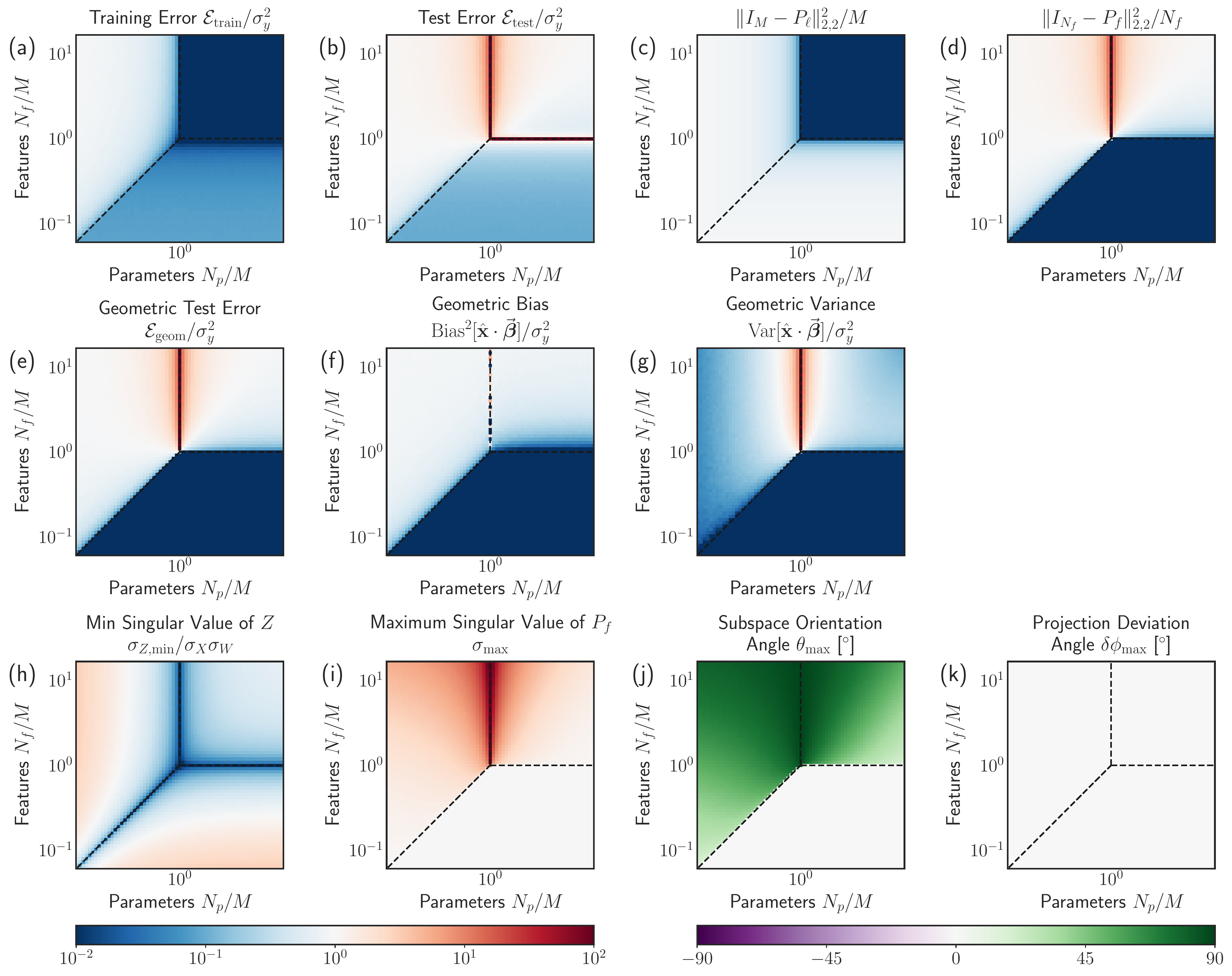}
\caption{{\bf Random Linear Features}. 
Average {\bf (a)} training error, {\bf (b)} test error,
{\bf (e)} geometric test error, {\bf (f)} geometric bias, and {\bf (g)} geometric variance.
Average Frobenius matrix norm of the orthogonal complements of {\bf (c)} the data projector $I_M - P_\ell$ and {\bf (d)} the feature operator $I_{N_f} - P_f$ .
Average {\bf (h)} minimum singular value of $Z$ and  {\bf (i)} maximum singular value of $P_f$.
Average {\bf (j)} subspace orientation angle $\theta_{\max}$ and  {\bf (k)}  projection deviation angle $\delta \phi_{\max}$ associated with the maximum singular value of $P_f$. The angle $\delta \phi_{\max}$ is taken to be zero when $\theta_{\max} = 0$ and $\sigma_{\max} = 1$.
Vertical and horizontal lines indicate the interpolation threshold.
The diagonal dashed lines indicate the boundary between the geometrically biased and unbiased regimes.
 }
\label{fig:SI_rlf}
\end{figure*}

\begin{figure*}
\centering
\includegraphics[width=1.0\linewidth]{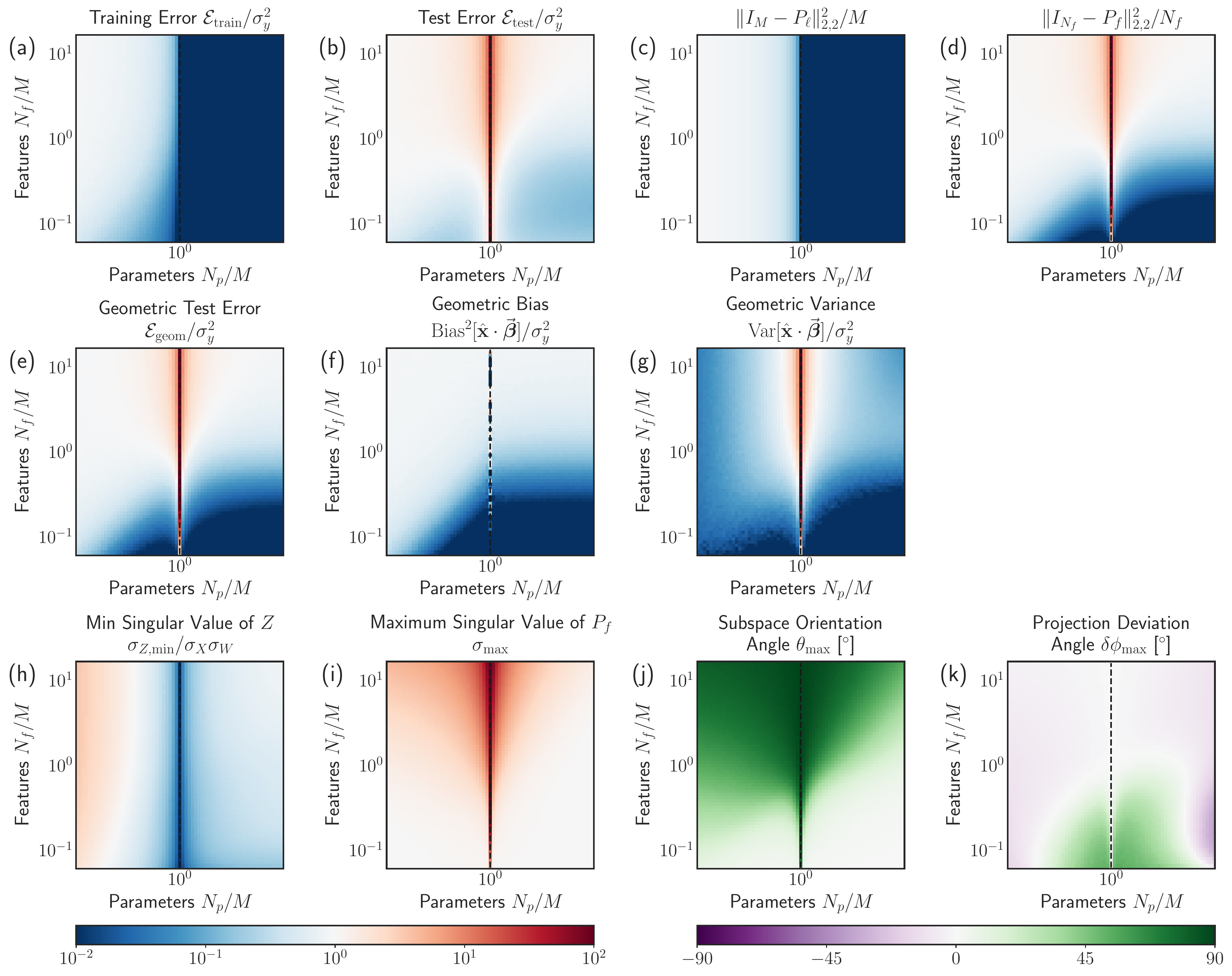}
\caption{{\bf Random Nonlinear Features}. 
Average {\bf (a)} training error, {\bf (b)} test error,
{\bf (e)} geometric test error, {\bf (f)} geometric bias, and {\bf (g)} geometric variance.
Average Frobenius matrix norm of the orthogonal complements of {\bf (c)} the data projector $I_M - P_\ell$ and {\bf (d)} the feature operator $I_{N_f} - P_f$.
Average {\bf (h)} minimum singular value of $Z$ and  {\bf (i)} maximum singular value of $P_f$.
Average {\bf (j)} subspace orientation angle $\theta_{\max}$ and {\bf (k)} projection deviation angle $\delta \phi_{\max}$ associated with the maximum singular value of $P_f$.
Vertical lines indicate the interpolation threshold.
 }
\label{fig:SI_rnlf}
\end{figure*}

\end{document}